\def\eqref#1{equation~\ref{#1}}
\def\1{\bm{1}}
\DeclareMathAlphabet{\mathsfit}{\encodingdefault}{\sfdefault}{m}{sl}
\SetMathAlphabet{\mathsfit}{bold}{\encodingdefault}{\sfdefault}{bx}{n}
\newcommand{\R}{\mathbb{R}}
\newcommand{\N}{\mathbb{N}}
\newcommand{\CN}{\mathcal{N}}
\newcommand{\CR}{\mathcal{R}}
\newcommand{\nup}{\pi} 
\newcommand{\ya}{\hat{y}}
\newcommand{\n}{n}
\newcommand{\m}{m}
\newcommand{\ns}{s}
\newcommand{\parambound}{\gamma}
\newcommand{\suitactivation}{\phi}
\newcommand{\domain}{U}
\newcommand{\pbactivation}{\phi}
\newcommand{\ff}[3]{y_{#1}(#2, #3)}
\newcommand{\params}{\theta}
\newcommand{\inputvec}{x}
\newcommand{\matrec}{W}
\newcommand{\mathid}{B}
\newcommand{\matout}{A}
\newcommand{\readout}{C}
\newcommand{\biasvec}{b}
\newcommand{\din}{d}
\newcommand{\dout}{l}
\newcommand{\matin}{B}
\newcommand{\leakscale}{\alpha}
\newcommand{\netscale}{\beta}
\newcommand{\dhidrnn}{m}
\newcommand{\mask}{\mathcal{M}}
\newcommand{\Wf}{B}
\newcommand{\h}{r}
\newcommand{\norm}[1]{||#1||}
\newtheorem{lem}{Lemma}
\newtheorem{prop}{Proposition}
\newtheorem{cor}{Corollary}
\newtheorem{dfn}{Definition}
\newcolumntype{L}[1]{>{\raggedright\let\newline\\\arraybackslash\hspace{0pt}}m{#1}}
\newcolumntype{C}[1]{>{\centering\let\newline\\\arraybackslash\hspace{0pt}}m{#1}}
\newcolumntype{R}[1]{>{\raggedleft\let\newline\\\arraybackslash\hspace{0pt}}m{#1}}
\title{Expressivity of Neural Networks with \\ Random Weights and Learned Biases}
\author{Ezekiel Williams \\
Mathematics and Statistics\\
Universit\'{e} de Montr\'{e}al\\
\texttt{ezekiel.williams@mila.quebec} \\
\And 
Alexandre Payeur\\
Mathematics and Statistics\\
Universit\'{e} de Montr\'{e}al \\
\texttt{alexandre.payeur@mila.quebec\phantom{\rule[0.0em]{5.0ex}{0pt}}} \\
\And
Avery Hee-Woon Ryoo \\
Computer Science \\
Universit\'{e} de Montr\'{e}al \\
\texttt{hee-woon.ryoo@mila.quebec}
\And
Thomas Jiralerspong \\
Computer Science \\
Universit\'{e} de Montr\'{e}al \\
\texttt{thomas.jiralerspong@mila.quebec}
\And
Matthew G Perich \\
Neuroscience \\
Universit\'{e} de Montr\'{e}al \\
\texttt{matthew.perich@umontreal.ca}
\And
Luca Mazzucato\thanks{co-senior authors} \\
Biology, Physics, and Mathematics \\
University of Oregon \\
\texttt{lmazzuca@uoregon.edu\phantom{\rule[0.0em]{15.0ex}{0pt}}}
\And
Guillaume Lajoie$^*$ \\
Mathematics and Statistics\\
Universit\'{e} de Montr\'{e}al\\
\texttt{guillaume.lajoie@mila.quebec}
}
\begin{document}

\maketitle

\begin{abstract}

Landmark universal function approximation results for neural networks with trained weights and biases provided the impetus for the ubiquitous use of neural networks as learning models in neuroscience and Artificial Intelligence (AI). Recent work has extended these results to networks in which a smaller subset of weights (e.g., output weights) are tuned, leaving other parameters random. However, it remains an open question whether universal approximation holds when only \emph{biases} are learned, despite evidence from neuroscience and AI that biases significantly shape neural responses. The current paper answers this question. We provide theoretical and numerical evidence demonstrating that feedforward neural networks with fixed random weights can approximate any continuous function on compact sets. We further show an analogous result for the approximation of dynamical systems with recurrent neural networks. Our findings are relevant to neuroscience, where they demonstrate the potential for behaviourally relevant changes in dynamics without modifying synaptic weights, as well as for AI, where they shed light on recent fine-tuning methods for large language models, like bias and prefix-based approaches.
\end{abstract}

\section{Introduction}
\label{sec:intro}

The universal approximation theorems \cite{hornik1989multilayer, funahashi1989approximate, hornik1991approximation} of the late 1900s highlighted the \emph{expressivity} of neural network models, i.e. their ability to approximate or \emph{express} a broad class of functions through the tuning of weights and biases, heralding the central role that neural networks play in Machine Learning (ML) and neuroscience today. Since these foundational studies, a rich literature has explored the limits of this expressivity by finding smaller parameter subsets that, when optimized, can still support the approximation of wide classes of functions or dynamics. Prior work has explored the approximation capabilities of Feedforward Neural Networks (FNNs) and Recurrent Neural Networks (RNNs) where only the output weights are trained \cite{rosenblatt1962principles, rahimi2008weighted, ding2014extreme, neufeld2023universal, jaeger2004harnessing, sussillo2009generating, gonon2023approximation, hart2021echo}, and deep FNNs where only subsets of parameters \cite{rosenfeld2019intriguing}, normalization parameters \cite{burkholz2023batch, giannou2023expressive}, or binary masks--either over units or parameters--are trained \cite{malach2020proving}. 

In this work, we study the expressivity of neural networks where only biases--which can be interpreted as constant inputs to neural units--are learned. While this may seem like an odd pursuit, bias-related learning is central to several active areas of research. In AI, modern sequence models such as transformers can have their outputs reshaped based on examples or instructions presented in an unchanging prefix sequence. No changes of weights are performed in this case, only distinct inputs carrying information about the context are passed to pre-trained but fixed neural network components \cite{li2021prefix, marvin2023prompt, garg2022can, von2023transformers}. Even more closely related, training only biases is a strategy that has been used recently \cite{zaken2021bitfit} for more efficient fine-tuning. In neuroscience, there is growing evidence that animals can leverage inputs--via long-range projections from higher cortical areas or neuromodulatory nuclei--in order to rapidly and flexibly adapt network dynamics to multiple tasks \cite{mazzucato2019expectation,ogawa2023multitasking,perich2018learning,remington2018flexible}. 
These tonic inputs mediating the effects of projections from other brain areas to the given, local, circuit are typically modelled as biases (see Supplementary Table \ref{table:bio-biases}).  
Consequently, input-based learning is a conceptually crucial yet poorly understood component of both modern AI systems as well as of brain functions.

Quantifying the expressivity of bias learning would show the degree to which the brain or neural networks can rely on the adaptation of bias-related parameters to structure their dynamics for new tasks, thus providing critical theoretical grounding to this growing literature.
If tuning the biases of a neural network will only span a reduced set of functions, or output dynamics, then this would solidify the role of synaptic plasticity as \emph{the} critical component in biological and artificial learning. Conversely, if one can express arbitrary dynamics solely by changing biases, this would motivate deeper investigation of when and how non-synaptic mechanisms might shoulder some of the effort of learning. In this paper, we take a first step towards characterizing the expressivity of bias learning by studying the arguably worst-case scenario of a neural network with unstructured weights. In a regime where all weight parameters are randomly initialized and frozen, and only hidden-layer biases are optimized--which we term \emph{bias learning}--we give theoretical guarantees demonstrating that:

\begin{enumerate}
    \item bias-learning FNNs with wide hidden layers are universal function approximators with high probability;
    \item bias-learning RNNs with wide hidden layers can arbitrarily approximate finite-time trajectories from smooth dynamical systems with high probability.
\end{enumerate}
We further provide empirical support for, and a deeper interrogation of, these results with numerical experiments exploring multi-task learning, motor-control, and dynamical system forecasting. 

\subsection{Related works}
\label{sec:related-works}

{\bf Machine Learning}.
Many efforts have explored neural networks that are trained to quickly meta-learn new tasks via dynamics in activation space alone, without any adaptation of weights (see \cite{feldkamp_adaptive_1997, klos_dynamical_2020, cotter_fixed-weight_1990, cotter1991learning, hochreiter2001learning, subramoney2024fast}). Like our work, this research proposes a mechanism by which a network might ``learn'' any new task without changing weights. However, prior work differs from the current study in that it requires an initial meta-training of all parameters in a network, weights included, before operating in the "fast learning" regime where network variables maintain context information that allow the networks to rapidly adapt to new tasks. In some cases, these context variables can be thought of as biases \cite{cotter1991learning}.

From a mathematical perspective, our work is closely related to masking, particularly the \emph{Strong Lottery Ticket Hypothesis} (SLTH). This hypothesis conjectures that a desired network parameterization could be found as a sub-network in a larger, appropriately-initialized, network \cite{ramanujan2020s}. SLTH is typically formulated with respect to weights, i.e., a subnetwork is defined by deleting weights from the full network. However, a few studies have investigated SLTH where subnetworks are constructed by deleting units \cite{malach2020proving}, which we term \emph{SLTH over units}. While our study is different for its focus on function approximation via bias optimization, rather than finding ``lottery ticket" subnetworks, a key step in our analytic derivations relies on masking in a fashion analogous to proofs of SLTH over units. Thus, our work also provides two results that may be of interest to the SLTH theory: a novel proof of SLTH over units in single-layer FNNs, complementing the work of \cite{malach2020proving} (see Connections with Malach et al. 2020 in \S \ref{sec:fftheory-app} for details), and a first proof of SLTH for RNNs (see Section \S \ref{sec:theory} for more details). Flavours of the lottery ticket hypothesis for RNNs have been explored empirically \cite{yu2019playing, garcia2021hidden, schlake2022evaluating} but we have not encountered its proof, neither over weights nor units, in the literature. 

{\bf Neuroscience}. Changes in input biases, which mediate a change in the input-output transfer functions of neurons, can explain the context-dependent effects of expectation \cite{mazzucato2019expectation}, movements \cite{wyrick2021state} and arousal \cite{papadopoulos2024modulation} on sensory processing across modalities and brain areas. Some of these effects occur by plasticity-driven changes in amygdalar projections to cortex, induced by associative learning \cite{vincis2016associative,haley2020ltd}. Changes in neural firing threshold and network inputs, similar to bias modulations, were shown to shape network dynamics: threshold heterogeneity can improve network capacity \cite{gast2024neural,gast2023macroscopic} and reconfigure circuit dynamics on fast timescales \cite{perich2018learning,remington2018flexible}. A recent study showed that, in RNNs trained to perform neuroscience tasks, learning biases via language model embedding leads to zero-shot generalization to new tasks \cite{riveland2024natural}.
Within the reservoir computing approach to modelling in neuroscience, where recurrent weights are random and fixed, bias modulations can toggle between multiple phases (including fixed point, chaos, and multistable regimes) and, strikingly, enable RNN multi-tasking in the absence of any parameter optimization \cite{ogawa2023multitasking}. While slightly different than bias parameters, a repertoire of dynamical motifs can also be generated in RNN reservoirs with dynamic feedback loops \cite{logiaco2021thalamic} and by modulating inputs in pre-trained networks \cite{driscoll2024flexible}. 

\section{Theory results}
\label{sec:theory}

\subsection{Feedforward neural networks}
\label{sec:fftheory}

This section studies the single-layer FNN, whose output is given by:
\begin{equation}
    \ff{\n}{\inputvec}{\params} = \sum_{i=1}^{\n}A_{:i}\phi(B_{i:}\inputvec + \biasvec_i),
    \label{eq:MLP}
\end{equation}
with $\matout\in\R^{\dout\times\n}$, $\mathid\in\R^{\n\times\din}$, $\biasvec \in \R^\n$, and $\params = \{\matout, \mathid, \biasvec\}$. Note that here, and throughout the paper, we adopt the notation $X_{i:}$ and $X_{:j}$ to denote the $i^{th}$ row and $j^{th}$ column, respectively, of a matrix $X$. We shall investigate the approximation properties of this neural network when all the weights in $\mathid$ and $\matout$ are fixed and sampled uniformly from the $\n(\dout + \din)$-dimensional centered hypercube, where the half-edges are of length $\parambound$ and only $\biasvec$ is tuned. We begin by outlining the activation function assumptions necessary for our theoretical results. 

\begin{dfn}
The function $\suitactivation$ is a \emph{suitable activation} if, when employed in the neural network of Eq.~\ref{eq:MLP}, it allows for universal approximation of the following kind: for any continuous $h:\domain\to \R^\dout$ and any $\epsilon > 0$, $\exists n \in \N$ and parameters $\params$ s.t. $\sup_{x\in\domain}\norm{h(x) - y_n(x, \params)} \leq \epsilon$,
where $\domain \subset \R^\din$ is compact and $\norm{\cdot}$ is the $1$-norm and will be throughout the paper.
\label{def:suitable-activation}
\end{dfn}

From the universal approximation theorems of 1993 \cite{leshno1993multilayer, hornik1993some}, a sufficient condition for $\phi$ to be a suitable activation is that it is non-polynomial. In this paper we conceptualize universal approximation as the approximation of continuous functions on compact sets with respect to an $L^\infty$ functional norm, but we remark that the literature has also studied other conditions on $h$ (for example measurability) and other forms of convergence. For a review of the literature, see \cite{pinkus1999approximation}.

\begin{dfn}
A suitable activation $\pbactivation$ is referred to as a \emph{$\parambound$-parameter bounding activation} if it allows for universal approximation even when each individual parameter, e.g. an element of a weight matrix or bias vector, is bounded by $\parambound$.
\label{def:parameter-bounding}
\end{dfn}


\begin{restatable}{prop}{reluparambound}
    The ReLU and the Heaviside step function are $\parambound$-parameter bounding activations for any $\parambound > 0$.
    \label{prop:relu-parambound}
\end{restatable}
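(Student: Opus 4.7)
The plan is to bootstrap off the classical universal approximation statement: since both ReLU and Heaviside are non-polynomial, they qualify as suitable activations in the sense of Definition~\ref{def:suitable-activation}, so for any continuous $h:U\to\R^l$ and any $\epsilon>0$ there exist a width $n$ and parameters $\theta=\{A,B,b\}$ yielding $\int_U \|h(x)-\ff{n}{x}{\theta}\|dx \leq \epsilon$. Let $M$ denote the largest absolute value appearing in any entry of $A$, $B$, or $b$ in this approximator. I would then transform the network into an $\epsilon$-approximator whose parameters are individually bounded by $\gamma$ while preserving the function it computes; only the width will grow.

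First, I would shrink the hidden-layer parameters by exploiting the scaling structure of each activation. For ReLU, positive homogeneity gives, for every $c>0$ and each unit $i$,
$$A_{:i}\,\mathrm{ReLU}(B_{i:}x+b_i) \;=\; (A_{:i}/c)\,\mathrm{ReLU}(cB_{i:}x+cb_i),$$
so choosing $c=\min(1,\gamma/M)$ forces all entries of $cB$ and $cb$ to have magnitude at most $\gamma$. For Heaviside, scale invariance $H(cz)=H(z)$ for $c>0$ permits the same substitution on $B$ and $b$ with no effect on $A$ at all. In either case, after this step the hidden-layer weights and biases are bounded by $\gamma$ and only the output matrix may still exceed the bound.

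Second, I would bound the output matrix by a width-replication trick that is activation-agnostic. For each hidden unit $i$, replace it by $k$ identical copies sharing the same $cB_{i:}$ and $cb_i$ but each carrying output weight $A_{:i}/(ck)$; summing the copies reproduces the original contribution, and every entry of the new output matrix has magnitude at most $M/(ck)$, which falls below $\gamma$ once $k \geq M/(c\gamma)$. The resulting network has width $kn$, every parameter bounded by $\gamma$, and computes exactly the same function as the original $\epsilon$-approximator, which establishes that $\phi$ is $\gamma$-parameter bounding.

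The main obstacle is reconciling the opposing pull of the two rescaling moves: shrinking $B$ and $b$ inflates $A$, whereas rescaling $A$ downward would inflate $B$ and $b$. The replication step decouples them by paying for the reduction of $A$ in \emph{width} rather than in magnitude, and width is unconstrained in the universal-approximation statement we invoke. Both ReLU and Heaviside qualify because they are positively homogeneous (of degree $1$ and $0$ respectively); the same approach would not go through for, say, a sigmoid, which pinpoints exactly where the special structure of these two activations is being used.
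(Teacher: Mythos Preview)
Your proof is correct and follows essentially the same strategy as the paper's: exploit the positive homogeneity of the activation to shrink the hidden-layer parameters, then pay for the resulting inflation of the output weights by replicating units rather than rescaling. The paper combines the two moves into a single division of all parameters by an integer $\alpha$ followed by $\alpha^2$-fold replication, while you separate them into a scaling step and a replication step, but the mechanism and ingredients are identical (and your explicit handling of the Heaviside via degree-$0$ homogeneity is, if anything, cleaner than the paper's ``effectively the same'' remark).
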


The proof is in Appendix \S\ref{sec:fftheory-app}. A key subtlety of parameter-bounding is that it is a bound on \emph{individual}, scalar, parameters. Thus, as a network grows in width the bias vector and weight matrix norms will still grow accordingly. This may be important, as research suggests band-limited parameters cannot universally approximate, at least for certain activation types \cite{li2023powerful}. We leave it to future work to determine which other activations are parameter-bounding. 

We make one final definition:

\begin{dfn}
If $\pbactivation$ is a $\parambound$-parameter bounding activation, is continuous, \emph{and} if $\exists \tau \in \R$ such that for $x < \tau$ $\phi(x) = 0$, then we say that $\pbactivation$ is a $\parambound$-\emph{bias-learning activation}.
\label{def:bias-learning}
\end{dfn}

Obviously, the ReLU is a bias-learning activation. We leave the study of discontinuous functions like the Heaviside to future work. We conclude with the main result of this section. Define $p_R$ to be a uniform distribution on $[-R, +R]$, where $\parambound < R < \infty$.

\begin{restatable}{theo}{mainff}
    Assume that $\phi$ is $\parambound$-bias-learning and, for compact $\domain \subset \R^\din$, $h:\domain \to \R^\dout$ is continuous. Then, for any degree of accuracy $\epsilon > 0$ and probability of error $\delta \in (0, 1)$, there exists a hidden-layer width $\m \in \N$ and bias vector $\biasvec \in \R^\m$ such that, with a probability of $1-\delta$, a neural network given by Eq.\ref{eq:MLP} with each individual weight sampled from $p_R$  approximates $h$ with error less than $\epsilon$.
    \label{lem:ffmain}
\end{restatable}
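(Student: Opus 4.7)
The plan is to embed a small deterministic ``target'' network with parameters of bounded magnitude into the wide random network, using bias tuning both to activate matched units and to silence the rest. The argument has three stages followed by a single probabilistic union bound.

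\textbf{Stage 1 (target network).} Since $\phi$ is $\parambound$-parameter bounding (Definition~\ref{def:parameter-bounding}), there exist $\n \in \N$ and parameters $\tilde{\params} = \{\tilde{A}, \tilde{B}, \tilde{b}\}$ with every scalar entry in $[-\parambound, \parambound]$ such that $\int_\domain \norm{h(\inputvec) - \ff{\n}{\inputvec}{\tilde{\params}}} \dee \inputvec \leq \epsilon / 2$. This is the deterministic ``template'' I aim to reproduce inside the random network, losing only another $\epsilon/2$ of accuracy.

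\textbf{Stage 2 (matching random units to target units).} For each target index $k \in \{1,\ldots,\n\}$ I view $(\tilde{A}_{:k}, \tilde{B}_{k:}) \in [-\parambound,\parambound]^{\dout+\din}$ as a ``target point''. On the compact preactivation range induced by weights in $[-R,R]$, biases in $[-\parambound,\parambound]$ and inputs in $\domain$, $\phi$ is bounded and uniformly continuous with some modulus $\omega_\phi$. Decomposing the per-unit error into an output-weight mismatch term $\norm{A_{:j} - \tilde{A}_{:k}}\cdot|\phi(\tilde{B}_{k:}\inputvec + \tilde{b}_k)|$ and a preactivation mismatch term $\norm{A_{:j}}\cdot\omega_\phi(\norm{B_{j:} - \tilde{B}_{k:}}\cdot\sup_{\inputvec \in \domain}\norm{\inputvec})$, I can pick $\eta \in (0, R - \parambound)$ small enough that any random $(A_{:j}, B_{j:})$ within $\ell_\infty$-distance $\eta$ of a target point yields per-unit error at most $\epsilon/(2\n\,|\domain|)$ when its bias is set to $\tilde{b}_k$. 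Because each random pair is uniform on $[-R,R]^{\dout+\din}$, the $\eta$-neighbourhood of any fixed target point has probability at least $p_\eta := (\eta/R)^{\dout+\din}$.

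\textbf{Stage 3 (silencing and union bound).} Drawing $\m$ independent units, the probability that some target lacks a match is at most $\n(1-p_\eta)^\m \leq \n\exp(-p_\eta \m)$, so any $\m \geq p_\eta^{-1}\log(\n/\delta)$ pushes this below $\delta$. On the success event I assign one distinct matched unit to each target (a greedy matching succeeds because every target ball is non-empty, and overlapping balls only help) and set its bias to the corresponding $\tilde{b}_k$. For every remaining unit $j$, the preactivation $B_{j:}\inputvec$ has absolute value bounded by some constant $M$ depending on $R$, $\din$ and $\sup_{\inputvec\in\domain}\norm{\inputvec}$, so setting $b_j < \tau - M$ forces $\phi(B_{j:}\inputvec + b_j) = 0$ uniformly on $\domain$—this is the critical use of the ``$\phi(x)=0$ for $x<\tau$'' clause of Definition~\ref{def:bias-learning}. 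Summing per-unit errors, integrating over $\domain$, and adding the Stage-1 error $\epsilon/2$ gives total $L^1$ error at most $\epsilon$.

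\textbf{Main obstacle.} The conceptually novel ingredient is the silencing step, which reduces what looks like a ``random subnetwork'' question to a unit-level nearest-neighbour matching problem; after the reduction the bulk of the work is a careful accounting of $\omega_\phi$, of the small-cube volumes in $[-R,R]^{\dout+\din}$, and of how the bounds on $\tilde A, \tilde B$ from parameter-boundedness guarantee the target points lie in the interior of the sampling cube (so that $\eta$-balls are admissible). The main quantitative care is in choosing $\eta$ small enough to absorb both mismatch terms \emph{uniformly} over $\inputvec \in \domain$ while still keeping $\m = O(p_\eta^{-1}\log(\n/\delta))$ finite; the argument otherwise parallels the unit-level SLTH proof of \cite{malach2020proving} with the key twist that ``masking'' is implemented by driving the bias below $\tau$ rather than by a discrete indicator.
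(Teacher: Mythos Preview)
Your proposal follows essentially the same architecture as the paper's proof: build a bounded-parameter target network via Definition~\ref{def:parameter-bounding}, locate a matching sub-collection of random units inside the wide random layer, and then implement the selection by driving the biases of unused units below $\tau$. The paper packages the matching step as a separate masking lemma (Lemma~\ref{lem:binomial-approach-app}, proved via Lemma~\ref{lem:supp1}) and then observes in one line that biases can implement the mask, but the substance is the same.

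Two remarks are worth making. First, your per-target union bound is sharper than the paper's: the paper partitions the $\m$ hidden units into $\m/\n$ consecutive blocks of size $\n$ and asks that some \emph{entire block} match the template unit-for-unit, which forces the per-block success probability down to $(\eta/R)^{\n(\dout+\din+1)}$ rather than your per-unit $(\eta/R)^{\dout+\din}$. Your resulting lower bound on $\m$ is therefore exponentially better in $\n$, and the paper itself concedes (Remark~2 after Lemma~\ref{lem:supp1}) that its block bound is extremely loose. Setting the matched biases to the exact target values $\tilde b_k$, rather than relying on a random bias landing nearby as the paper does, is also a clean simplification.

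Second, there is a small but genuine gap in Stage~3. Knowing that each of the $\n$ target balls is non-empty does \emph{not} guarantee that distinct random units can be assigned to distinct targets: if two target weight-vectors coincide (or lie within $2\eta$ of each other) and only a single random unit lands in the shared region, greedy matching fails, so the claim ``overlapping balls only help'' is incorrect. The standard repair---and the one closest in spirit to the paper's block argument---is to partition the $\m$ units into $\n$ disjoint groups of size $\lfloor \m/\n\rfloor$ and require that group $k$ contain a match for target $k$; the union bound then reads $P(\text{fail})\le \n(1-p_\eta)^{\lfloor \m/\n\rfloor}$, which still yields a finite $\m$ and preserves your improved scaling. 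With this fix the argument is complete.
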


\begin{cor}
        Assume that $\din = \dout$, i.e. the input and output spaces of the network have the same dimension. Then  the results of Theorem \ref{lem:ffmain} also hold for single-hidden-layer ResNets.
\end{cor}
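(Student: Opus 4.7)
The plan is to reduce the residual-network approximation problem to the ordinary feed-forward result of Theorem \ref{lem:ffmain} by shifting the target function. A single-hidden-layer res-net with matched input/output dimension ($\din = \dout$) has the form
\begin{equation}
    y_n^{\mathrm{res}}(x, \params) = x + \sum_{i=1}^{\n}A_{:i}\phi(B_{i:}x + \biasvec),
\end{equation}
i.e.\ the skip connection adds the identity to the FFN of Eq.\ref{eq:MLP}. So if we define the shifted target $\tilde h(x) := h(x) - x$, then the $L^1$ approximation error of the res-net to $h$ equals the $L^1$ approximation error of the underlying FFN to $\tilde h$.

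First I would observe that $\tilde h : \domain \to \R^{\dout}$ is continuous, since it is the difference of the continuous map $h$ and the identity, and $\domain \subset \R^\din$ remains compact. Next I would apply Theorem \ref{lem:ffmain} directly to $\tilde h$ with the same $\epsilon$ and $\delta$, which yields a width $\m$ and bias vector $\biasvec \in \R^\m$ such that, with probability at least $1-\delta$ over the random weights drawn from $p_R$, the FFN $y_\m(\cdot, \params)$ approximates $\tilde h$ to within $\epsilon$ in the $L^1$ norm on $\domain$.

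Finally I would conclude by noting that for this same realization of random weights and the same learned bias,
\begin{equation}
    \int_\domain \norm{h(x) - y_\m^{\mathrm{res}}(x, \params)}\,dx = \int_\domain \norm{\tilde h(x) - y_\m(x, \params)}\,dx \leq \epsilon,
\end{equation}
which is exactly the desired statement for the res-net.

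Honestly, there is no substantive obstacle here: once one recognizes that the skip connection can be absorbed into the target by defining $\tilde h = h - x$, the corollary is immediate from Theorem \ref{lem:ffmain}. The only (minor) thing to verify is that the hypothesis $\din = \dout$ is precisely what makes the subtraction $h(x) - x$ well defined and continuous $\domain \to \R^{\dout}$, so that Theorem \ref{lem:ffmain} applies without modification.
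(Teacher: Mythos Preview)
Your proposal is correct and is essentially the same argument as the paper's: absorb the skip connection into the target by shifting $h$ and then invoke the feed-forward result. The paper phrases it via Lemma~\ref{lem:binomial-approach-app} (writing the shifted target as $h(x)+x$, evidently a sign slip for $h(x)-x$), but the substance is identical to what you wrote.
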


{\bf Proof Intuition:} We provide intuition about the proof of  Theorem \ref{lem:ffmain} and its Corollary, whose details can be found in the Appendix (also see Fig.~\ref{fig:supp-proof-intuition} for visual proof intuition). According to the universal approximation theorem, given a continuous function, we can find a one-hidden-layer network, $\CN_1$, that is close to that function in the $L^\infty$ norm on the (compact) space of its inputs. If $\CN_1$ has been constructed using $\parambound$-parameter bounding activation functions, then we know that each parameter will be on the interval $[-\parambound, \parambound]$. Next, we construct a second network, $\CN_2$, to approximate $\CN_1$ by randomly sampling each of its parameters, weight or bias, from $p_R$. For $\CN_2$ to approximate $\CN_1$, each parameter of $\CN_2$ should fall within a tiny window of an analogous parameter in $\CN_1$. This window must have half-length less than $\epsilon$ to yield the desired error bound. Without loss of generality, we can assume $\epsilon < R - \gamma$. Then, if we sample parameters uniformly on $[-R, R]$, there will be a non-zero probability that a given parameter of $\CN_2$ will end up within the tiny $\epsilon$-window centered at a corresponding parameter value in $\CN_1$; because $\epsilon < R - \gamma$ we know that the $\epsilon$-window won't stretch outside the distribution support. If we randomly sample a \emph{very} large number of units to construct the hidden layer of $\CN_2$ the probability of finding a subnetwork of $\CN_2$ corresponding to $\CN_1$ can be made arbitrarily close to $1$. If the activation function is bias-learning we can use biases to pick out this subnetwork by setting them appropriately smaller than the threshold given in Definition \ref{def:bias-learning}. We remark that this proof estimates exceedingly massive hidden-layer widths that, based on our numerical results, over-estimate the scaling of bias learning by orders of magnitude (see Remark 2 on Lemma \ref{lem:supp1}). We thus view this proof not as a statement about scaling but as a statement of existence: for some sufficiently large but finite layer width, one can approximate the desired function with bias learning.

\subsection{Recurrent neural networks}
\label{sec:rnn-theory}

Here, we study a discrete-time RNN  given by:
\begin{align}
    \h_t = -\leakscale\h_{t-1} + \netscale\phi(\matrec\h_{t-1} + \matin\inputvec_{t-1} + \biasvec), \quad \hat{y}_t = \readout\h_t,
    \label{eq:rnn}
\end{align}
where $\h_t\in\R^\m$ for all $0\leq t \leq T$ for some $T \in \N$, $\leakscale$ and $\netscale$ control the time scale of the dynamics, $\matrec\in\R^{\m\times\m}$, $\readout\in\R^{\dout\times\m}$, and $\mathid$ and $\biasvec$ are as in the previous section. The parameters are now $\params = \{\matrec, \matin, \readout, \biasvec\}$. The time-dependent input $\inputvec_t$ belongs to a compact subset $\domain_x \subset \R^\din$ for all $t$. Note that when $\leakscale = 0$, $\netscale=1$ one gets the standard vanilla RNN formulation; alternatively, $\alpha$ and $\beta$ can be set to approximate continuous-time dynamics using Euler's method.

We will approximate the following class of dynamical systems by learning only biases:
\begin{equation}
    z_{t+1} = F(z_t, x_t), \quad y_t = Q z_t, \quad z_0 \in \domain_z,
    \label{eq:ds}
\end{equation}
where $t$ and $x_t$ are as defined for the RNN, $F:\domain_z \times \domain_x \to \R^\ns$ is continuous, and $Q \in \R^{\dout\times\ns}$. Because we build from the classic universal approximation results, we must be working with functions operating on compact sets. To guarantee that this will be the case we must make several more assumptions about the dynamical system. First, $\domain_z\subset\R^\ns$ is assumed to be a compact invariant set of the dynamical system: if the system is in $\domain_z$ it remains there for all $t$ and for all inputs in $\domain_x$. Second, we assume that the dynamical system is well-defined on a slightly larger compact set $\tilde\domain_z \times \domain_x$, where $\tilde\domain_z = \{z_0 + c_0 : z_0\in\domain_z, \: \norm{c_0}<c\}$ for some $c>0$, with $\tilde\domain_z \supset \domain_z$. 

To generalize the approximation in $L^\infty$ norm in our analysis of FNNs to dynamical systems we consider an \emph{infinity norm over finite trajectories}: $\sup_{z_0\in\domain_z, \mathbf{x}_t\in\domain_{\mathbf{x}}}\sum_{t=1}^T\norm{y_t(z_0, \mathbf{x}_t) - \ya_t(r_0, \mathbf{x}_t)}$, where $\mathbf{x}_t \equiv [x_0,\dots, x_{t-1}]$, $r_0 \equiv r_0(z_0)$ is a continuous map from $\domain_z$ into the hidden state space of the RNN, and $\domain_\mathbf{x}$ is the $t$-times product space of $\domain_x$. Letting $p_R$ be defined as in Section \ref{sec:fftheory}, the main result of this section is:

\begin{restatable}{theo}{mainrnn}
    Consider the RNN in Eq.\ref{eq:rnn} with $\phi$ a $\parambound$-bias-learning activation, and input, output, and recurrent weight parameters for each hidden unit sampled from $p_R$. We can find a hidden-layer width, $\m$, a bias vector, and a continuous hidden-state initial condition map $r_0:\domain_z\mapsto\R^\m$ such that, with a probability that is arbitrarily close to $1$, the RNN approximates the dynamical system defined in Eq.\ref{eq:ds} to below any positive, non-zero, error in the inifinity norm over trajectories.
\end{restatable}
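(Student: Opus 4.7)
The plan is to leverage Theorem \ref{lem:ffmain} on $F$ itself, use the resulting FFN to build a small ``target'' RNN whose trajectory shadows the dynamical system, and then embed this target RNN as a bias-selected subnetwork inside the random RNN of large width $\m$ via an SLTH-over-units argument adapted to the recurrent setting.

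First, since $F: \tilde\domain_z \times \domain_x \to \R^\ns$ is continuous on a compact set, I would apply Theorem \ref{lem:ffmain}---strengthening the $L^1$ bound to a uniform bound, which is standard for continuous targets on compact sets and is inherited from the classical uniform universal approximation theorem---to produce a width-$\n$ FFN $\hat F(z, x) = A \phi(U z + V x + c)$ with every scalar weight of modulus at most $\parambound$ and $\sup_{(z,x) \in \tilde\domain_z \times \domain_x} \norm{F(z, x) - \hat F(z, x)} < \eta$. I would then build a target RNN of width $\n$ with $\leakscale = 0$, $\netscale = 1$, parameters $W^\star = U A$, $B^\star = V$, $b^\star = c$, $C^\star = Q A$, and initial hidden state $r_0^\star$ solving $A r_0^\star = z_0$ (feasible when $A$ has full row rank $\ns$, which is generic given $\n \gg \ns$). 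By induction, $r_t^\star = \phi(U A r_{t-1}^\star + V x_{t-1} + c) \approx \phi(U z_{t-1} + V x_{t-1} + c)$, so $A r_t^\star \approx \hat F(z_{t-1}, x_{t-1}) \approx z_t$; the per-step error compounds through a Lipschitz factor over the finite horizon $T$ to at most a constant times $\eta$, which can be made arbitrarily small. The enlargement $\tilde\domain_z \supset \domain_z$ absorbs any excursions of the approximate trajectory outside $\domain_z$, keeping the FFN approximation valid at every step.

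For the SLTH-over-units step, I would note that the entries of $U A$, $V$, and $Q A$ are each bounded by some $\tilde\parambound$ depending only on $\n, \ns, \parambound, \norm{Q}$; enlarging $R$ beyond $\tilde\parambound$ leaves room for an $\epsilon'$-window strictly inside $[-R, R]$. I would partition the $\m$ random hidden units into $\n$ equal groups $I_1, \ldots, I_\n$, and call an assignment $(j_1, \ldots, j_\n) \in I_1 \times \cdots \times I_\n$ \emph{valid} if, for each $i$, every entry of $\{W_{j_i, j_k}\}_k$, $B_{j_i,:}$, and $C_{:,j_i}$ lies within $\epsilon'$ of the corresponding target entry. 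A second-moment (Poissonization) calculation on the number of valid assignments then shows that for $\m$ sufficiently large at least one valid assignment exists with probability arbitrarily close to $1$. Fixing such an $S = \{j_1, \ldots, j_\n\}$, I would set the biases on $S$ to $b^\star$ and the biases off $S$ sufficiently negative that each masked unit's preactivation stays below the threshold $\tau$ of Definition \ref{def:bias-learning} for every reachable state, forcing masked units to output $0$ at every time step. Taking the initial state to be $r_0^\star$ on $S$ and arbitrary elsewhere, the surviving effective RNN on $S$ matches the target RNN to within $O(\epsilon')$ per step, which combined with the preceding bound yields total trajectory error of order $T(\eta + \epsilon')$---smaller than any prescribed $\epsilon$ by appropriate choice of $\eta$ and $\epsilon'$.

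The hard part will be the SLTH step: unlike in the feed-forward case, a target unit's incoming recurrent weights come from \emph{other} target units, so the validity of an assignment is not a product of per-unit independent events and a plain union bound is unavailable. Controlling this self-referential combinatorial structure is precisely what forces the partition-plus-second-moment machinery, with careful bookkeeping of the dependencies introduced by the $\n^2$ recurrent entries that must match simultaneously across the assignment. A secondary concern is keeping the approximate trajectory inside $\tilde\domain_z$ across all $T$ steps so that the FFN approximation of $F$ remains meaningful, which is exactly why the hypothesis enlarges the compact invariant set $\domain_z$ to the slightly bigger $\tilde\domain_z$.
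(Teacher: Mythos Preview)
Your two-stage architecture---build a bounded-parameter target RNN from an FFN approximation of $F$, then locate it as a subnetwork of the wide random RNN via an SLTH-over-units argument, then kill the complement with large negative biases---is exactly the paper's route. The paper packages step one as Proposition~\ref{prop:rnn-universal-approx} (its recurrent weight $WA$ and readout $QA$ are your $UA$, $QA$) and step two as Lemma~\ref{lem:supp1}, and the final theorem is then the one-line observation that bias-learning activations let biases implement the mask.

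Where you diverge is the SLTH step, and here you have made life harder than necessary. The paper does \emph{not} partition the $\m$ units into $\n$ groups and search over assignments; it partitions them into $\m/\n$ contiguous \emph{blocks} of $\n$ units each and asks whether any single block matches $\params^\star$ entry-wise to within $\varepsilon$. Because distinct blocks share no units, the $\n(\n+\din+\dout+1)$ random entries governing one block are independent of those governing another, so ``block $k$ works'' are i.i.d.\ Bernoulli events with success probability $(\varepsilon/R)^{\n(\n+\din+\dout+1)}$, and a first-moment bound gives $P(\text{no block works})=[1-(\varepsilon/R)^{\n(\n+\din+\dout+1)}]^{\m/\n}<\delta$ for $\m$ large. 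The recurrent self-reference you flag as ``the hard part'' disappears entirely once the candidate subnetwork is forced to be a contiguous block: its $\n\times\n$ recurrent submatrix is just $\n^2$ fresh i.i.d.\ entries of $W$. Your partition-plus-second-moment scheme may also work (and would plausibly yield sharper $\m$-scaling, roughly $(R/\varepsilon)^{\n+\din+\dout}$ per group versus the paper's $(R/\varepsilon)^{\n(\n+\din+\dout+1)}$), but it is not needed for the theorem and you have only asserted, not carried out, the variance calculation.

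A few smaller fixable issues. You cite Theorem~\ref{lem:ffmain} to obtain the target FFN, but that theorem concerns \emph{random}-weight bias-learned networks; what you actually want is classical uniform approximation together with the $\parambound$-parameter-bounding property (Definition~\ref{def:parameter-bounding}, Proposition~\ref{prop:relu-parambound}). You cannot ``enlarge $R$ beyond $\tilde\parambound$'' since $p_R$ is fixed by hypothesis; instead shrink the FFN parameter bound so that entries of $UA$ and $QA$ land strictly inside $[-R,R]$, which is possible because $\phi$ is $\parambound$-parameter-bounding for every $\parambound>0$. Setting $\leakscale=0,\netscale=1$ handles only a special case; the paper's construction works for the given $\leakscale,\netscale$ and the target RNN must match them. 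Finally, ``arbitrary elsewhere'' for the initial state on masked units is wrong when $\leakscale\neq 0$: the leak term $-\leakscale r_{t-1}$ keeps those coordinates alive and they feed back into the active units through $W$, so you must set them to zero (permitted, since the theorem lets you choose $r_0$).
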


{\bf Proof Intuition:} We provide a high-level description here while the detailed proof and a schematic (Fig.~\ref{fig:supp-proof-intuition}) can be found in the Appendix. As in Theorem \ref{lem:ffmain} the proof proceeds in two steps. First, the dynamical system is approximated by an RNN, $\CR_1$, using universal approximation theory for RNNs (see e.g., \cite{schafer2006recurrent}). $\CR_1$ is then approximated by a much wider, random RNN, $\CR_2$, with parameters sampled from $p_R$. Analogous to Theorem \ref{lem:ffmain}, we show that one can find a sub-network of hidden units in $\CR_2$ that approximates $\CR_1$ for very large hidden widths of $\CR_2$..

\section{Numerical results}
\label{sec:numerical-results}

\subsection{Multi-task learning with bias-learned FNNs}
\label{sec:numerical-results-1}

We first validated the theory by checking whether a single-hidden-layer bias-learned FNN could perform classification on the Fashion MNIST dataset \cite{deng2012mnist} increasingly well as its hidden layer was widened. We compared fully-trained networks, matching the number of trained parameters, with bias-learned networks where the frozen weights were sampled from a uniform distribution on $[-0.1, 0.1]$ or a zero-mean Gaussian with standard deviation $\frac{1}{\sqrt{d}}$, where $d$ is the input dimension. The networks successfully learned the task and validation error decreased with the number of trained parameters (Fig.~\ref{fig:feed1}A) which, for bias-learned networks, is equal to the number of hidden units. The largest gains in performance occurred before $5\times10^3$ units, with bias-learning achieving comparable, but slightly worse, performance compared to fully-trained models. We speculate that this gap in performance for large network sizes is due either to current deep learning training conventions being optimized for weight training, or to bias learning requiring even larger hidden layer sizes to match fully-trained performance. Interestingly, for very low parameter counts bias-learning out-performed fully trained networks (more visible with log-log axis scaling \ref{fig:supp-figure}.A). Because weights scale quadratically with layer width, a fully-trained network will have a smaller width than a bias-learned network with the same number of trained parameters (see Fig.\ref{fig:supp-figure}.B for the error scaling plot with layer width on the $x$-axis).

\begin{figure}[t]
  \centering
  \includegraphics[width=0.96\textwidth]{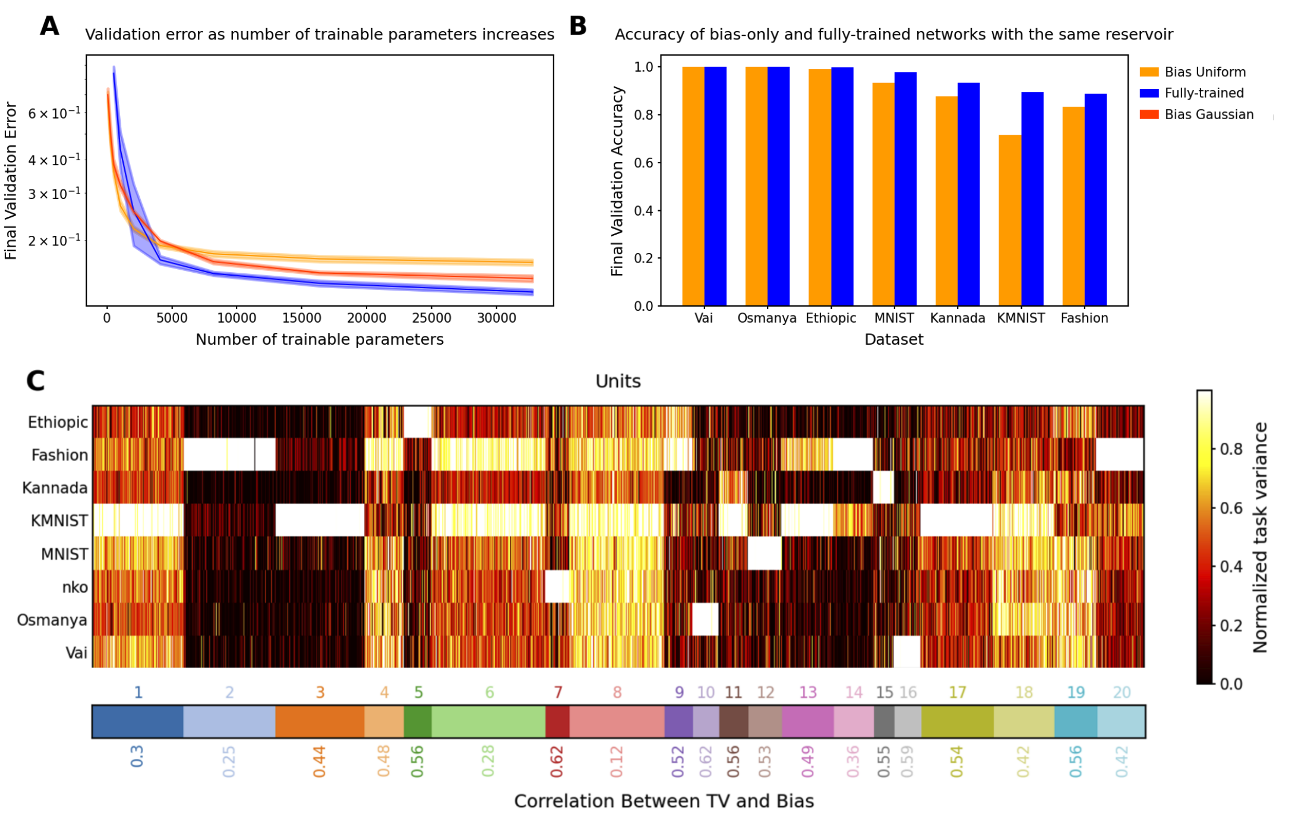}
  \caption{\textbf{ A.} Validation accuracy on fashion MNIST vs.\! number of trained parameters for fully-trained (blue), bias-learned with uniformly distributed weights (light orange), and bias-learned with Gaussian weights (dark orange) networks.
  \textbf{B.} Validation accuracy on multiple image classification tasks for bias-learned (orange) and fully-trained (blue) networks. Errors for 5 random seeds are barely visible as the shaded regions in A, and are omitted in B because the standard errors are of order $10^{-3}$. \textbf{C.} Top: K-mean clustering of Task Variance (TV) reveals task-selective clusters (see Fig.\ref{fig:supp-fig1} for fully-trained network selectivity). Bottom: Spearman correlation between TV and bias vectors (mean across neurons in each cluster).}
  \label{fig:feed1}
\end{figure}

Intuitively, bias learning should allow a single random set of weights to be used to learn multiple tasks by simply optimizing task-specific bias vectors. We confirmed this by training a single-hidden-layer FNN with $3.2\times10^4$ hidden units on 7 different tasks: MNIST \cite{deng2012mnist}, KMNIST \cite{clanuwat2018deep}, Fashion MNIST \cite{xiao2017fashion}, Ethiopic-MNIST, Vai-MNIST, and Osmanya-MNIST from Afro-MNIST \cite{wu2020afromnist}, and Kannada-MNIST \cite{prabhu2019kannada}. All tasks involved classifying 28$\times$28 grayscale images into 10 classes. The random weights were fixed across tasks while different biases were learned. We compared bias learning against a fully-trained neural network with the same size and architecture (Fig.~\ref{fig:feed1}B). We found that the bias-only network achieved similar performance to the fully-trained network on most tasks (only significantly worse on KMNIST). An important difference here is that the networks had matched size and architecture, so that the number of trainable parameters in the bias-only network ($3.2\times10^4$ parameters) was several orders of magnitude smaller than in the fully-trained case ($\approx 2.5\times10^7$ parameters). Notably, a different set of biases was learned for each task. We conclude that bias-only learning in FNNs could be a viable avenue to perform multi-tasking with randomly initialized and fixed weights, but that it requires a much wider hidden layer than fully trained networks. Lastly, we note that the networks of Fig.~\ref{fig:feed1}B were trained with uniformly initialized weights, but that one can achieve similar, or even better performance with different weight initializations (see Fig.~\ref{fig:supp-figure}C).

Next, we investigated the task-specific responses of hidden units by estimating single-unit Task Variances (TV) \cite{yang2019task}, defined as the variance of a hidden unit activation across the test set for each task. The TV provides a measure of the extent that a given hidden unit contributes to the given task: a unit with high TV in one task and low TV for all others is seen as selective for its high-TV task. We clustered the hidden-unit TVs using K-means clustering ($K$ chosen by cross-validation) on the vectors of TVs for each unit and found that distinct clusters of units emerged (Fig.~\ref{fig:feed1}C). Some units reflected strong task selectivity (ex: cluster 3 for KMNIST and cluster 10 for Osmanya). Others responded to many, or all, tasks (ex: clusters 1 and 8), although a smaller fraction of clusters exhibited such non-selective activation patterns. Overall, we conclude that multi-task bias learning leads to the emergence of task-specific organization. We note, however, that task selectivity does not necessarily mean task utility: for example, a neuron could have a high variance for a single task but that variance could be picking up on noise and thus not functionally useful. We leave a deeper investigation of the functional significance of task-selectivity to future work.

Finally, we explored the relationship between the bias of a hidden unit and its TV. If the neural networks are using biases to shut-off units, analogous to the intuition in our theory (Section \ref{sec:fftheory}), then the units that do not actively participate in a task should be quiet due to a low bias value learned during training on that particular task. In other words, this intuition would suggest that units should exhibit a correlation between bias and TV, especially in task-specific clusters. In our experiments, all clusters did exhibit the statistical trend of a positive correlation between bias and TV, although to a varying degree across clusters (see numbers at the bottom of Fig.~\ref{fig:feed1}C). 



\subsection{Relationship between bias learning and mask learning in FNNs}
\label{sec:numerical-results-2}
\begin{figure}[t]
  \centering
  \includegraphics[width=0.98\textwidth]{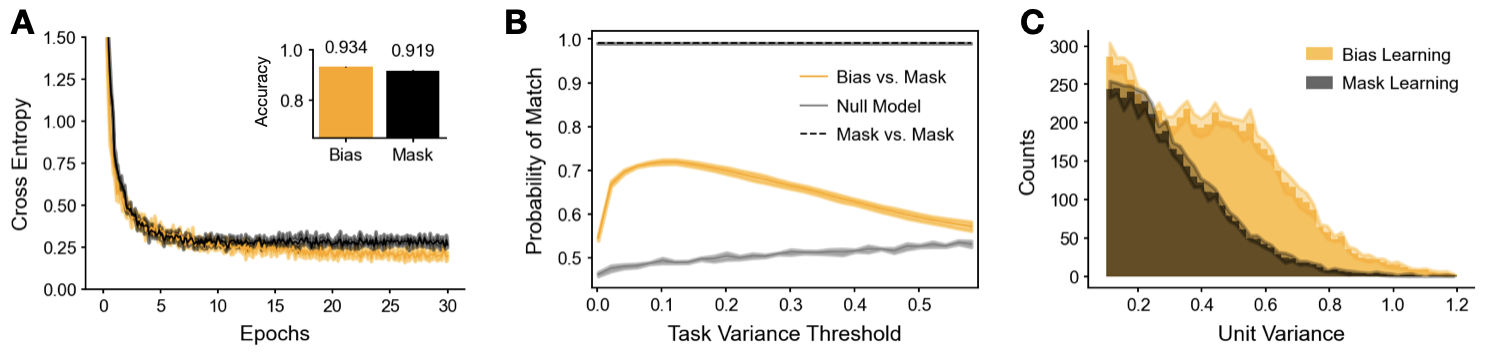}
  \caption{\textbf{Comparing bias and mask learning on same weights. A.} Learning curves for bias (orange) and mask (black) learning on MNIST. Inset: bias learning achieved roughly $1\%$ higher test accuracy over mask learning ($0.934\pm0.001 \mathrm{SD}$ bias vs. $0.919\pm0.002 \mathrm{SD}$ mask). \textbf{B.} Probability ($y$-axis) of the same unit being ON in both the bias-learning and mask-learning networks (orange line). A unit is `ON' in mask learning if it is not masked out, and in bias learning if it has task variance above a given threshold ($x$-axis). Also shown is the probability of a unit being ON in two different training runs for mask-learning (black dashed line), and a null model giving the expected overlap if the probability of a unit being ON in the bias-trained network is independent of whether it is ON in the mask network (see Appendix \S\ref{sec:methods2} for more details) \textbf{C.} Histograms of hidden unit variances, calculated over $10^4$ test set MNIST samples, for bias-trained (orange) and mask-trained (black). Unit variances below $0.1$ are not shown. 
  All curves, and histograms, are means, with shaded regions being $1$SD over $5$ training runs. 
  }
  \label{fig:bias-mask}
\end{figure}
As our theory shows that bias learning networks can universally approximate simply by turning units off, we wished to test whether bias learning performs similarly to learning masks, and to what extent solutions learned by these approaches are different from each other. We compared training mask to bias learning on networks with the same random input/output weight matrices. For mask-training, we approximated binary masks using `soft' sigmoid masks with learned gain parameters (see Methods). The approximation of a discontinuity with a differentiable function was done to allow optimization with gradient descent, a strategy with a history of use in ML \cite{jang2016categorical} and computational neuroscience \cite{zenke2018superspike}. The sigmoid was steepened over the course of training to approximate the binary mask that was used at test time. We compared masks learned in this fashion with learned biases on single-hidden-layer ReLU networks with $10^4$ units. We observed a trend of bias-training slightly improving upon mask-training (Fig.\ref{fig:bias-mask}A), which was expected given that biases can be tuned over a continuous range of values, including $0$ and $1$, while masks can only take $0$ or $1$. Further research is needed to determine if this trend is reliable across datasets and different network parametrizations, and whether there might be scenarios where one style of learning works better or worse.

Next, we compared the solutions found via bias and mask learning on the same set of randomly initialized weights. We calculated the variance of each hidden unit across $10^4$ MNIST test images, in both the bias and mask-trained paradigms, as a measure of hidden layer representation. To investigate whether the same units were active during the task regardless of training style, we looked at which units were `ON' in mask versus bias-trained networks. For mask learning a unit was considered ON if its mask was $1$; for bias learning a unit was ON if its task variance was above a given threshold. We observed that neurons with higher task variances contributed more to solving the task (Fig.~\ref{fig:supp-fig2}A). For a range of thresholds, we calculated the probability that a given unit was ON for both mask and bias learning (Fig.\ref{fig:bias-mask}B). We found that, for low thresholds, this match probability was intermediate between chance (grey line) and the high probability of a match between two mask-learned training runs on the same set of weights (black dashed line). We further noted that, on average, mask learning used $4527\pm44$ `ON' units (in this section all reported values are mean$\pm$SD over $5$ samples) to solve the task. For bias learning, when the task variance threshold was moved from $0$ to $0.1$ test accuracy dropped about $1\%$ and the number of ON units went from $10^4$ (all units) to $6226\pm47$. Thus mask learning, found a sparser solution than bias learning. To further investigate the differences and similarities between unmasked units, we plotted the histograms of unit variances above the $0.1$ threshold, and observed that bias learning used higher variances to accomplish the task (Fig.\ref{fig:bias-mask}C). Finally, we found a moderate correlation between the task variances of units that were ON for both mask and bias learning ($0.5030\pm0.0087$) (Fig~\ref{fig:supp-fig2}C). In summary, we observe that, relative to mask learning, bias learning finds a different, but overlapping, solution to MNIST.

\subsection{Bias learning autonomous dynamical systems with RNNs}
\label{sec:numerical-results-3}

\begin{figure}[t]
  \centering
  \includegraphics[width=0.98\textwidth]{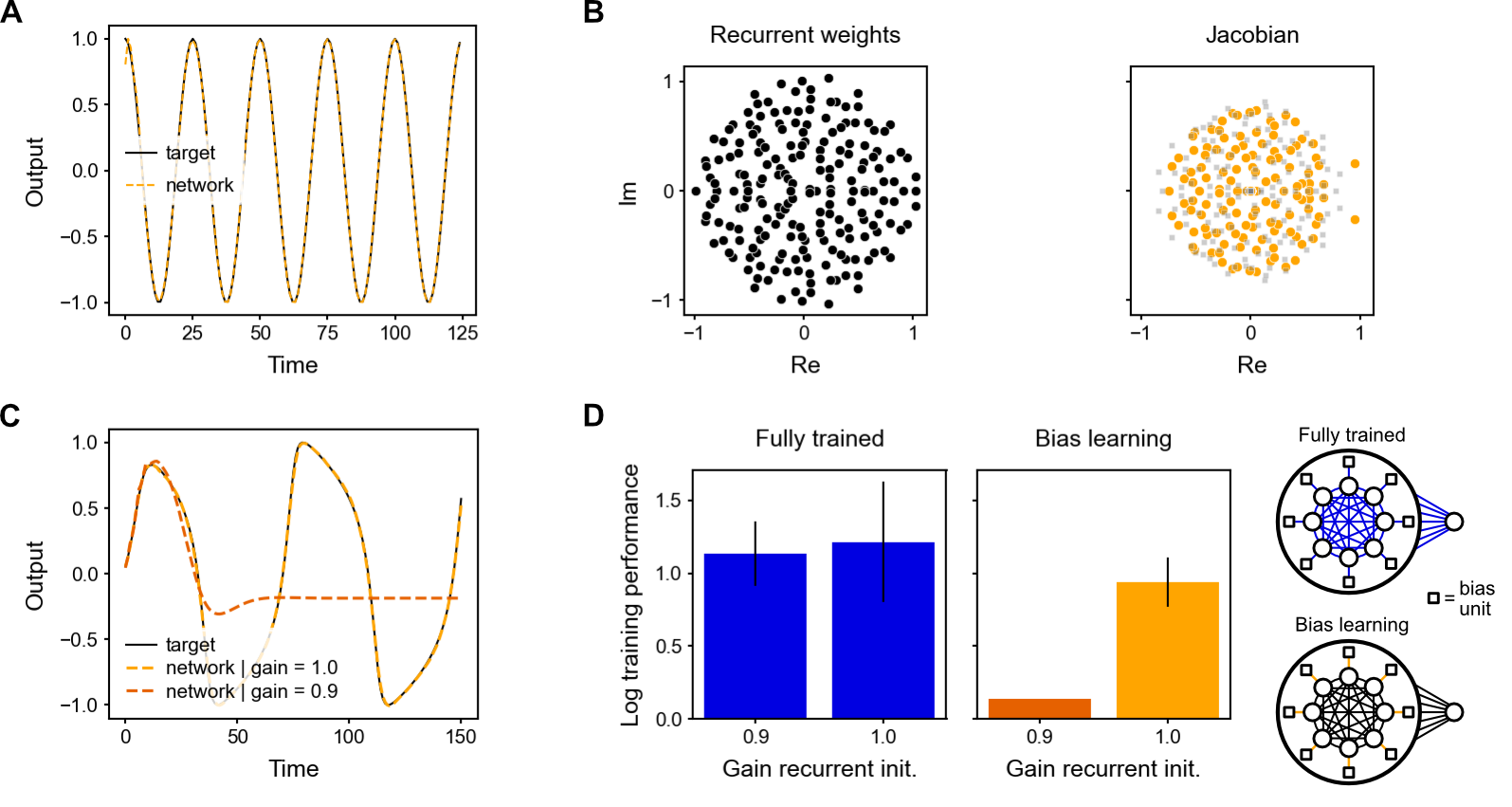}
  \caption{\textbf{Learning autonomous dynamical systems. A.} Cosine generated by a bias-learning RNN 
  (dashed orange) and its target (solid black).  \textbf{B.}  Eigenvalue spectra for the recurrent weights (left) and the Jacobian at the start of training (right, grey squares) and mid-training (right, orange circles), when the network produced a decaying oscillation with period 23.75, close to the target period of 25. Neural activity then approached a fixed point with respect to which the Jacobian was computed. 
  \textbf{C.} Van der Pol oscillator (target in solid black) 
  generated by the bias-learning RNN 
  for a recurrent gain of 1 (dashed orange; see panel D) and a gain of 0.9 (dashed dark orange). Output represents the oscillator's position, rescaled to [-1, 1]. \textbf{D.} (Left) Sensitivity to distribution of recurrent weights. The fully-trained and bias-learning networks had the same number of learnable parameters. Initial recurrent weight matrix had elements sampled from $(g/\sqrt{m}) \mathcal{N}(0,1)$, where $g$ is the gain (Gain recurrent init.). Error bars denote SEM for $n=10$. (Right) Schematics of the fully-trained (top) and bias-learning (bottom) autonomous RNNs. Colored links denote trained weights. 
  }
  \label{fig:autonomousDS}
\end{figure}

We studied the expressivity of bias learning in RNNs trained to generate linear and nonlinear dynamical systems autonomously (i.e., with $x_t \equiv 0$ in Eq.~\ref{eq:rnn}). We found that RNNs with fixed and random Gaussian weights and trained biases were able to generate a simple cosine function (Fig.~\ref{fig:autonomousDS}A). We then elucidated the mechanism underlying RNN bias learning by comparing the Jacobian matrix after learning with the random recurrent weight matrix (which was held fixed during learning). We found that although the random weight matrix maintained a fixed and circular eigenvalue distribution (Fig.~\ref{fig:autonomousDS}B, left), learning the biases shaped the Jacobian matrix to develop complex conjugate pairs of large eigenvalues underlying the oscillations (Fig.~\ref{fig:autonomousDS}B, right). Therefore, bias learning strongly relies on the ability to shape the ``effective connectivity matrix'', i.e.~the Jacobian, which involves the derivative of the activation and the recurrent weight matrix. 

We next investigated whether bias learning relied on the statistics of the fixed recurrent weights. In light of Fig.~\ref{fig:autonomousDS}B, we thus hypothesized that bias learning would be affected by changes in the weight distribution, because bias learning can only control the derivative. We initialized an i.i.d. Gaussian distributed weight matrix $W_{ij}\sim{\cal N}(0,g^2/N)$, where $g$ is referred to as its `gain'.   We then trained bias-learning networks to generate a van der Pol oscillator (Fig.~\ref{fig:autonomousDS}C). We found that bias learning required a large enough gain (at least $g=1$) and failed for $g<1$ (Fig.~\ref{fig:autonomousDS}D). 
This was not purely due to a restricted dynamic range for the network activity since the network was able to reproduce the first peak of the oscillator and then flatlined (Fig.~\ref{fig:autonomousDS}C). In contrast, fully-trained networks with the same number of training parameters (Fig.~\ref{fig:autonomousDS}D) were not sensitive to the value of the gain at initialization. 
This result thus highlights that, when the hidden-layer size is fixed, the initial distribution of weights limits the capability of bias-learning networks. 
  

\subsection{Bias learning non-autonomous dynamical systems with RNNs}
\label{sec:numerical-results-4}

To further test bias learning in RNNs, we trained a RNN to predict future time-steps of a partially observed dynamical system, namely a single dimension of the Lorenz attractor (see Appendix \S\ref{sec:methods4}for details). As in the autonomous DS case, only the biases of the input layer were trained and the weights were random and frozen. However, here the network received the observed dimension of the Lorenz system as an input. Given the observed dimension at time-point $t$, and its value at previous time-steps encoded in the RNN's hidden state, the objective of the task was to predict the future value at $t+\tau$, where $\tau = 27$ was chosen to be the half-width at the half-max of the auto-correlation of the observed dimension of the Lorenz system. 
\begin{figure}
  \centering
  \includegraphics[width=1\textwidth]{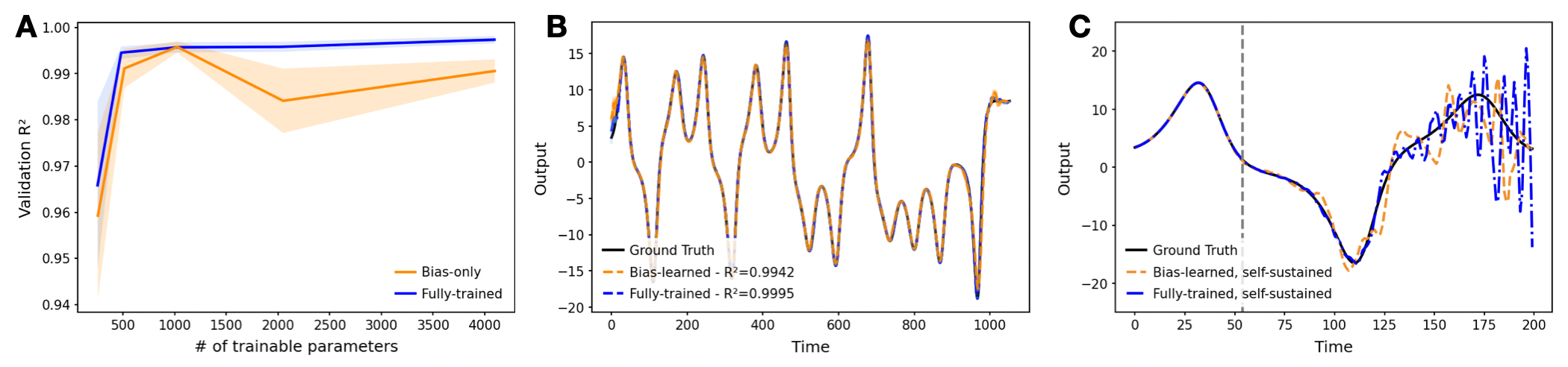}
  \caption{\textbf{Learning non-autonomous dynamical systems.} \textbf{A.} Validation $R^2$ vs.\! number of trainable model parameters for fully-trained (blue) and bias-learned (orange) RNNs. Training RNNs with bias learning became unstable below a network width of $64$. \textbf{B.} Predictions from the fully-trained and bias-learned networks (both with a hidden layer width of 1024) on a trajectory of the Lorenz system unseen during training. Standard deviation error bars were computed over 5 seeds, but are not visible due to their small magnitudes. \textbf{C.} Predictions of both the fully-trained and bias-learned networks diverge from the ground truth signal when one starts feeding back their own outputs as their inputs, in place of the ground-truth time-series (self-sustained, starting from the grey line).
  \label{fig:lorentz}
  }
\end{figure}

The performance of the bias-learned RNNs scaled, as a function of trainable model parameters, in a qualitatively similar fashion to the bias-trained FNNs (Fig.~\ref{fig:lorentz}A; see Fig.~\ref{fig:supp-figure}D for scaling as function of layer width). Both the fully-trained and sufficiently wide bias-learned networks accurately predicted future points of the Lorenz system, evidenced by a consistent $R^2$ metric of $>0.99$ ($n$=5) achieved by networks with a hidden-layer width of 1024 on a window of the Lorenz time-series held out during training (Fig.~\ref{fig:lorentz}B). However, when the networks were fed their own previous predictions as input, in place of the ground-truth time series, in an autoregressive, `self-sustained' fashion, their prediction accuracy decreased, demonstrating the damaging effect of small compounding deviations propagated through time (Fig.~\ref{fig:lorentz}C).

\subsection{Bias-learned Motor control with RNNs} \label{sec:numerical-results-5}

Finally, we tested whether bias learning could solve a center-out reaching task, a paradigm routinely used to study motor control in human and non-human primates \cite{ashe_movement_1994, shadmehr_adaptive_1994}. Starting from the center of the workspace, the subject must move the selected end effector (e.g., their right hand) to reach several peripheral targets placed equidistantly on a circle. We modelled this task using an RNN with random weights and learned biases, where linear readouts controlled a point-mass arm (a unit mass modeling the arm's behavior; see Methods). The task objective was to reach the peripheral targets in 1 second, with near-zero velocity and force at the end of movement, which were imposed using regularization terms in the loss function. A 1,024-units network required approximately $10^4$ training epochs---where one epoch involved presenting all targets once---to solve the task (Fig.~\ref{fig:center_out}A). (Note that a parameter-matched, fully-trained RNN can also solve this task, in $\sim 10^3$ epochs (Fig.~\ref{fig:supp-fig-center-out}).) Importantly, a single set of biases was used to reach all 6 targets for each trained network. The resulting trajectories successfully reached the targets (Fig.~\ref{fig:center_out}B, dark curves and black targets), and exhibited the characteristic bell-shaped velocity profile \cite{harris_signal-dependent_1998} (Fig.~\ref{fig:center_out}C, top). Crucially, the trained network generalized decently to new targets on the circle (Fig.~\ref{fig:center_out}B, light curves and grey targets; Fig.~\ref{fig:center_out}C, bottom). To achieve this, the network had to produce both acceleration and deceleration when given information about the Cartesian position of a target never seen in the training period. These results highlight the flexibility of bias learning in generating diverse open-loop controls. 

\begin{figure}[t]
  \centering
  \includegraphics[width=0.98\textwidth]{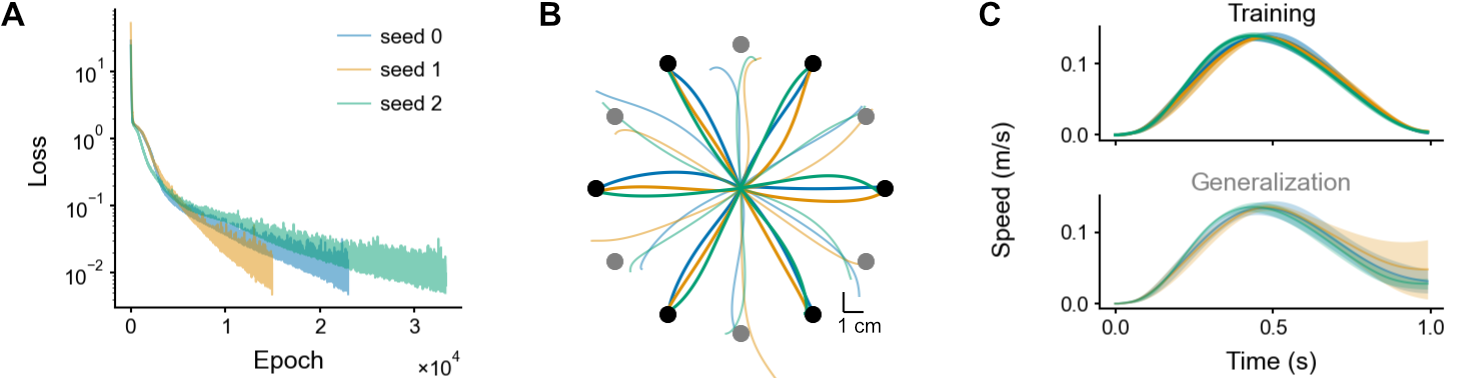}
  \caption{\textbf{Center-out reaching task.}  \textbf{A.} Training loss for 3 network initializations. \textbf{B.} Trajectories for the trained (black) and tested (grey) targets. 
  \textbf{C.} Speeds ($(\dot{x}^2 + \dot{y}^2)^{1/2}$) for the trained (top) and tested (bottom) targets (mean $\pm$ SD across targets).
  \label{fig:center_out}
  }
\end{figure}



\section{Discussion}

In this paper, we presented theoretical results demonstrating that FNNs and RNNs with fixed random weights but learnable biases can approximate arbitrary functions with high probability. We showcased the expressivity of bias-learned networks in multi-task, times-series, and motor control tasks, and interrogated their learned representations by analyzing task selectivity and comparing with mask-learning in FNNs, and by analyzing eigenvalues in RNNs.


We underline four limitations of our study that might inspire future research. First, the convergence results for dynamical systems were only for finite-time trajectories. One could overcome this limitation by studying convergence in stationary distribution. Second, a potential confounding factor in our comparison of bias and mask learning is that the latter approach used a learning schedule in the steepness parameter for the soft masks. It is possible that the altered learning dynamics due to this scheduling contributed to mask and bias learning finding different solutions. Addressing this confound is an important direction for future work. Third, a better grasp of bias learning scaling--how layer width or parameter count increases as a function of performance--is needed. Our theory does not shed light here as it significantly overestimates how a bias-learned network should scale (Remark 2 on Lemma \ref{lem:supp1} in Appendix). Some insight can be gleaned by noting that, with bias-learning activations, tuned biases can express any mask-learned solution. Thus, results showing that mask learning layer widths scale polynomially in the inverse error and the size of a performance-matched random feature model (see \cite{malach2020proving} Theorem 3.2) represent a worst-case scaling for bias learning. Whether bias learning scales better than mask learning could be a good starting point for addressing scaling. Fourth, while our results appeared to generalize beyond one-layer networks (Fig.~\ref{fig:supp-figure}E), a more detailed study of bias-learned deep FNNs is left to future work.

We lastly highlight the need for greater biological detail in future studies of bias learning. Past experimental \cite{ferguson2020mechanisms} and theoretical work  \cite{wyrick2021state,ogawa2023multitasking} showed that neural mechanisms modulating biases, like firing threshold or tonic inputs, may effect other neuronal properties, like neuron input-output gain. As our proofs rely on masking, they demonstrate universal approximation not just for bias learning but for any learned mechanism that can mask neurons. Exploring paradigms where gain (\cite{stroud2018motor}) and biases are learned in concert could be an interesting direction to exploit this. Finally, the observed distribution of synaptic weights in the brain is not uniform but long-tailed \cite{song2005highly}, highlighting the need for bias learning models with more structured weight initializations. We hypothesize that structure in the weights intermediate between fully random and fully learned might yield an optimal combination of performance and training efficiency. If such structure improves hidden-layer scaling, this could enable bias learning to support temporal credit assignment algorithms that struggle in weight space due to the $N^2$ scaling of synapses. A fascinating alternative is that the same number of parameters could achieve a given task performance, regardless of whether they are weights or biases.

\section*{Acknowledgements}
E.W. was supported by an NSERC CGS D scholarship, and wishes to thank the other members of the
Lajoie lab for support and for helpful discussions. T.J. was supported by an NSERC CGS M scholarship. M.G.P. was supported by grant the Fonds de recherche du Québec – Santé (chercheurs-boursiers
en intelligence artificielle). L.M. was partially supported by National Institutes of Health grants
R01NS118461, R01MH127375 and R01DA055439 and National Science Foundation CAREER
Award 2238247. GL acknowledges CIFAR and Canada chair program.

\bibliography{iclr2025_conference, iclr_2025_ap}
\bibliographystyle{iclr2025_conference}

\newpage 

\appendix


\section{Biological Bias-Related Mechanisms}

\setcounter{figure}{0} 
\renewcommand{\thefigure}{A.\arabic{figure}}
\renewcommand{\theHfigure}{A.\arabic{figure}}

\begin{table}[hbt!]
\renewcommand{\arraystretch}{1.5}
\centering\small
\begin{tabular}{L{0.15\linewidth}|L{0.15\linewidth}|L{0.15\linewidth}|L{0.15\linewidth}| L{0.15\linewidth}}
    \hline
      System & Bias model & Mechanism & Effect & Refs\\
     \hline
    Working memory & 
     Dopamine projections to cortical circuit & Bias of NMDA or GABA conductances & Enhance memory
signal-to-noise ratio and resistance to distractors & \cite{brunel2001effects} \\
    Behavioral modulation of visual responses & Thalamic projections to visual cortex & Bias to pyramidal and somatostatin cells & Paradoxical reversal of somatostatin cell responses to running & \cite{garcia2017paradoxical}\\
     Motor sequences & Basal ganglia  projections to thalamus & Inhibitory biases of thalamic neurons & Toggle on/off different motor primitives & \cite{logiaco2021thalamic}\\
    Motor sequences & Secondary motor cortex projections to primary motor cortex (M1) & Biases of M1 neurons & Select initial conditions (anticipatory activity) for motor primitives & \cite{recanatesi2022metastable,mazzucato2022neural} \\
    Short-term motor adaptation & Premotor cortex projections to primary motor cortex (M1) & Recruitment of M1 neurons & Shift pre-movement preparatory states & \cite{perich2018learning,feulner2022small} \\
    Expectation and taste processing & Amygdalar projections to gustatory cortex & Biases of cortical neurons & Acceleration of taste coding via gain modulation & \cite{mazzucato2019expectation,vincis2016associative,haley2020ltd}\\
    Movements and visual processing & Thalamic projections to visual cortex & Biases of cortical neurons & Acceleration of visual coding via gain modulation& \cite{wyrick2021state} \\
    Arousal and auditory processing & Neuromodulatory projections to auditory cortex & Biases of cortical neurons & Optimal encoding of auditory stimuli at intermediate arousal & \cite{papadopoulos2024modulation}\\
     \hline\\
\end{tabular}
\caption{{\bf Potential biological mechanisms for bias-related plasticity in the brain.} } \label{table:bio-biases}
\end{table}

\section{Mathematical Proofs}

\setcounter{figure}{0} 
\renewcommand{\thefigure}{B.\arabic{figure}}
\renewcommand{\theHfigure}{B.\arabic{figure}}

Throughout the appendix the proofs are restated for ease of reference. We will always take $\norm{\cdot}$ to be the $1$-norm unless stated otherwise.

\subsection{Random Neural Network Formulation}

The proofs of this section revolve around masked, random, neural networks:

\begin{align}
    \tilde{r}_\m^{\mask} = -\alpha r_0 + \mask\odot\beta\phi(Wr_0 + Bx + b), \quad \tilde{y}_\m^{\mask} = A\tilde{r}_\m^{\mask}, 
    \label{eq:one}
\end{align}

where $0 \leq \alpha < \infty$, $0 < \beta < \infty$, $\m\in\N$, $r_0, \: \tilde{r}_\m^{\mask}\in\R^\m$, $x\in\R^\din$, $\tilde{y}_\m^{\mask}\in\R^\dout$, $\mask \in\{0,1\}^\m$, and all other matrices and vectors have real elements with the dimensions required by the above definitions. 
We assume that $\phi$ is $\parambound$-parameter bounding and that each individual (scalar) parameter, be it weight or bias, is sampled randomly--before masking--from a uniform distribution on $[- \bar\parambound, \bar\parambound]$ (note that here we are using $\bar\parambound$ where we used $R$ in the main text). In this way the parameters are random variables with compact support. If $\mask = \mathbf{1}$ then we drop the superscript. To account for feed-forward neural networks we simply assume that $W$ is the zero matrix.

W.l.o.g. assume there are $\n$ non-zero elements in $\mask$. We construct $W^\mask\in\R^{\n\times\n}$--the recurrent matrix restricted to participating (non-masked) hidden units--by beginning with $W$ and deleting the $i^{th}$ row and $i^{th}$ column of the matrix if $\mask_i=0$. We construct $B^\mask\in\R^{\n\times\din}$, $A^\mask\in\R^{\dout\times\n}$, and $b^\mask\in\R^\n$ by deleting the $i^{th}$ row of $B$, $A$, and $i^{th}$ element of $b$ if $\mask_i=0$.

Consider the case where the $i^{th}$ element of $r_0$ is $0$ whenever $\mask_i = 0$. Then, regardless of whether Eq.\ref{eq:one} represents a feed-forward network or the transition function for an RNN, the masked units will always be zero. We can thus simply track the $\n$ units that correspond with $1$'s in $\mask$ as the outputs, $y^\mask$ will depend solely on these. We observe that the behaviour of these units can be described by the following network:

\begin{equation}
    r_\m^{\mask} = -\alpha r_0 + \beta\phi(W^\mask r_0 + B^\mask x + b^\mask), \quad y_\m^\mask = A^\mask r_\m^{\mask}.
    \label{eq:two}
\end{equation}

It is networks of the form of Eq.\ref{eq:two} that will be the primary subject of study in what follows. Note that the `$\sim$', over the $r$, is dropped to denote the fact that $r$ is a different vector on account of dropping the zero units. In the feed-forward case we use subscripts, as we have done above, to denote hidden layer width. Whenever we discuss RNNs or dynamical systems we will instead use the subscript to denote time.

\subsection{Proofs from Section \ref{sec:fftheory}}
\label{sec:fftheory-app}

\reluparambound*
\begin{proof}
    We prove this solely for the ReLU, as the logic for the Heaviside is effectively the same. Let $\phi$ thus be a ReLU. First, observe the following useful property: for all $\alpha > 0$ we have $\alpha\phi(x) = \phi(\alpha x)$. From this, consider the neural network of hidden layer width $n$ with ReLU activations, $y_n(\params)$, and observe:

    \begin{equation}
        y_n(\params) = \alpha^2 \sum_{i=1}^n\frac{A_{:i}}{\alpha} \phi\bigg(\frac{\Wf_{i:}}{\alpha}x + \frac{b_i}{\alpha}\bigg) = \alpha^2y_n\Big(\frac{\params}{\alpha}\Big).
        \label{eq:pb1}
    \end{equation}

    Moreover, if $\alpha \in \N$ we have

    \begin{equation}
        y_{n}(\params) = \sum_{i=1}^{\alpha^2n}\tilde{A}_{:i}\phi\big(\tilde{\Wf}_{i:}x + \tilde{b}_i\big) = y_{\alpha^2n}(\tilde{\params}),
    \end{equation}

    where $\tilde\params = [\frac{\params_1}{\alpha},\dots,\frac{\params_n}{\alpha},\dots\frac{\params_1}{\alpha},\dots,\frac{\params_n}{\alpha}]$ so that each element is simply a re-scaled and repeated version of the original parameters; we have $\alpha^2$ repeats for each term to replace the $\alpha^2$ factor in the LHS of Eq. \ref{eq:pb1}.

    Now, given an arbitrary compact set $U\in\R^d$, continuous function $h:\R^d\to\R^l$, and $\varepsilon>0$, by the universal approximator theory (see e.g. \cite{leshno1993multilayer, hornik1993some}) we can find $n$ such that

\begin{equation}
    \sup_{x\in\domain}\norm{h(x) - y_n(x, \params)} \leq \epsilon
    \label{eq:the-above-eq}
\end{equation}

holds. Because $n$ is finite we can bound every individual (scalar) parameter by $M$, for some sufficiently large $M$. Suppose we want the parameters to be bounded instead by $\parambound$ with  $M>\parambound>0$. If we select $\alpha\in\N$ s.t. $\alpha > \frac{M}{\parambound}$ then we can find $y_{\alpha^2 n}(x, \tilde{\params})$ such that $y_{\alpha^2 n}(x, \tilde{\params}) = y_{n}(x,\params)$. Thus we have found a parameter-bounding ReLU neural network satisfying Eq.\ref{eq:the-above-eq}, completing the proof.
    
\end{proof}

\textbf{Remark:} The intuition behind this result, for the ReLU, is credited to a reply to the \href{https://math.stackexchange.com/questions/4738183/universal-approximation-theorem-with-bounded-parameters}{Universal Approximation Theorem with Bounded Parameters} question on Mathematics Stack Exchange.

The following lemma constitutes the core of Theorem \ref{lem:ffmain}. It shows that one can achieve universal approximation, in the sense needed for the theorem, using masking. The theorem then follows by manipulating biases to achieve masking.

\begin{lem}
    Let $h : \domain \to \R^\dout$ be a continuous function on compact support $\domain \subset \R^\din$. Then for any $\epsilon>0, \delta \in (0, 1)$, we can find a layer width $\m\in\N$ such that with probability at least $1 - \delta$ $\exists \mask \in \{0, 1\}^\m$ satisfying the following:

    \begin{equation}
        \sup_{x\in\domain}|| h(x) - y_{\m}^{\mask}(x) || \leq \epsilon.
        \label{eq:binomial-approach-app}
    \end{equation}
    \label{lem:binomial-approach-app}
\end{lem}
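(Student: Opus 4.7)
The plan is to execute the two-step strategy sketched in the main text. First, I would invoke Definition~\ref{def:parameter-bounding} ($\parambound$-parameter bounding) to produce a deterministic target network $\CN_1$ of some finite width $n$ whose every scalar parameter lies in $[-\parambound,\parambound]$ and which satisfies $\int_\domain\norm{h(x)-y_n(x,\theta^*)}\,dx \leq \epsilon/2$. Second, I would exhibit a mask $\mask$ that selects $n$ of the $\m$ random units so that the resulting sub-network (Eq.~\ref{eq:two}) approximates $\CN_1$ coordinate-wise in parameter space to within a tolerance $\eta$, and hence in output to within $\epsilon/2$; the triangle inequality then closes out the total error at $\epsilon$.

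For the probabilistic matching step, fix $\eta\in(0,\bar\parambound-\parambound)$ so that an $\eta$-box around any target parameter lies entirely inside $[-\bar\parambound,\bar\parambound]$, where $\bar\parambound$ denotes the half-edge of the sampling cube. Each hidden unit of $\CN_1$ carries $\din+\dout+1$ scalar parameters (input weights, bias, and output weights), so by independence of the uniform samples the probability that a given random unit has every parameter within $\eta$ of those of target unit $i$ is exactly $p:=(\eta/\bar\parambound)^{\din+\dout+1}$. Partition the $\m$ random units into $n$ disjoint groups of $\lfloor \m/n\rfloor$ units, and declare success in group $i$ if at least one unit matches target $i$. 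The failure probability in each group is $(1-p)^{\lfloor \m/n\rfloor}$, and a union bound over the $n$ groups yields total failure probability at most $n(1-p)^{\lfloor \m/n\rfloor}$. Choosing $\m$ large enough to push this below $\delta$ gives the desired high-probability event; partitioning into disjoint groups automatically guarantees distinctness of the $n$ matches.

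To convert coordinate-wise parameter closeness into an $L^1$ output bound, I would use the compactness of the pre-activation range --- bounded because $x\in\domain$ is compact and all weights and biases live in $[-\bar\parambound,\bar\parambound]$ --- together with the continuity of $\phi$. Standard estimates then yield a constant $L=L(n,\din,\dout,\bar\parambound,\phi,\domain)$ such that $\sup_{x\in\domain}\norm{y_n(x,\theta^*)-y_n(x,\theta_{\mathrm{match}})}\leq L\eta$. Integrating over $\domain$ bounds the $L^1$ error by $L\eta\cdot\mathrm{vol}(\domain)$, and shrinking $\eta$ makes this at most $\epsilon/2$. Once $\eta$ is fixed, $p$ is fixed, and $\m$ is determined by the failure-probability bound above; any multiplicative factor $\beta$ appearing in Eq.~\ref{eq:two} can be absorbed into the output-weight matching window without changing the structure of the argument.

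The main technical obstacle is the quantitative Lipschitz-style estimate in the last step when $\phi$ is assumed only continuous rather than Lipschitz: on the compact pre-activation range $\phi$ is uniformly continuous, so the error contribution of each hidden unit can be controlled via a modulus of continuity, but one must propagate this through the sum over $n$ hidden units and through multiplication by the (bounded) output weights, and then invert the modulus to express $\eta$ as a function of $\epsilon$. A secondary subtlety is that the disjoint-group partition is not the probabilistically tightest estimate, but it cleanly sidesteps the combinatorial question of extracting distinct matches from an unrestricted union bound over all $\m$ random units, at a modest cost in how $\m$ must scale with $n$.
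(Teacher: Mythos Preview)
Your proposal is correct and mirrors the paper's two-step structure: approximate $h$ by a deterministic width-$n$ network with $\parambound$-bounded parameters, then show a wide random network contains a masked sub-network coordinate-wise close to this target, and finish with the triangle inequality. The substantive difference lies in the probabilistic matching step. The paper (its Lemma~\ref{lem:supp1}) partitions the $m$ random units into $m/n$ consecutive blocks of size $n$ and asks whether \emph{some entire block} matches all $n$ target units simultaneously, giving per-block success probability $(\eta/\bar\parambound)^{n(\din+\dout+1)}$ and hence $m$ scaling like $(\bar\parambound/\eta)^{n(\din+\dout+1)}$. You instead partition into $n$ groups of size $\lfloor m/n\rfloor$, each tasked with matching a \emph{single} target unit, yielding per-unit success probability $(\eta/\bar\parambound)^{\din+\dout+1}$ and $m$ scaling like $n\log(n/\delta)\,(\bar\parambound/\eta)^{\din+\dout+1}$---exponentially better in $n$, which directly addresses the weakness the paper itself flags in Remark~2 on Lemma~\ref{lem:supp1}. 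You are also more careful about the perturbation estimate: the paper's proof asserts that continuity of $\phi$ on a compact set implies a Lipschitz bound, which is false in general; your modulus-of-continuity formulation is the correct repair, though for the ReLU of primary interest the Lipschitz claim does happen to hold.
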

    
\begin{proof}

    First, we find a neural network with parameters that approximate the desired function $h$. Given the assumptions on $\phi$, we can use Proposition \ref{prop:universal-approx} to find $\n$ and parameters $\params^* = \{A^*, B^*, b^*\}$ such that

    \begin{equation}
        \sup_{x\in\domain} \norm{h(x) - y_n(x, \params^*)} \leq \frac{\epsilon}{2},
        \label{eq:step1ff}
    \end{equation}
    because $\domain$ is compact and $h$ is continuous.
    

    We now make two observations: first, all choices of $x$ are from a compact set, $\domain$, by assumption and the parameters of a given random or non-random neural net are also from a compact set--the $n$ dimensional hyper-cube with edge length $2\bar\parambound$. Second, the function $y_n$ is a continuous function of $x$ and the parameters. By these two observations the function $y_n$ is a continuous function on the compact product space of inputs and parameters, and thus admits a Lipshitz constant, $K_n$. This will come in handy momentarily.

    Next, we construct a masked random network that approximates $y_n$ with high probability. By Lemma \ref{lem:supp1}, we can find a random feed-forward neural network of hidden layer width $m$ such that a mask, $\mask$, exists satisfying $|\params^*_i - \params_i^\mask| < \varepsilon$ for some arbitrarily $\varepsilon>0$. In particular, we can choose $\varepsilon$ as: 
    

    \begin{equation}
        |\params^*_i - \params_i^\mask| < \varepsilon = \frac{\epsilon}{2K_n\n(\din + \dout + 1)}
        \label{eq:maskboundff}
    \end{equation}
    
    for all $i$ with probability at least $1-\delta$. If we are in the regime of probability $1 - \delta$ where the mask satisfying the above error bound exists then we get


        \begin{equation}
        \norm{y_\m^\mask(x) - y_\n(x, \theta^*)} \leq K_n\big(\norm{\theta^\mask - \theta^*} + \norm{x - x}\big) \leq K_n\n(\din+\dout+1)\varepsilon \leq \frac{\epsilon}{2},
        \label{eq:step2ff}
    \end{equation}

    where, in addition to Eq.\ref{eq:maskboundff}, we used the fact that $y_m^\mask(x) = y_n(x, \params^\mask)$, the continuity and compactness mentioned above, and repeated application of the triangle inequality. 
    Importantly, this bound holds for all $x\in\domain$. Because $\phi$ is assumed continuous, the function $f(x) = \norm{y_\m^\mask(x) - y_\n(x)}$ is also continuous. By the extreme value theorem $\exists$ $x^\star \in \domain$ such that $\sup_{x\in\domain}f(x) = f(x^\star)$. Since $x^\star \in \domain$ the bound from Eq.\ref{eq:step2ff} applies and we have:

    \begin{equation}
        \sup_{x\in\domain}\norm{y_\m^\mask(x) - y_\n(x)} = \norm{y_\m^\mask(x^\star) - y_\n(x^\star)} \leq \frac{\epsilon}{2}.
        \label{eq:step2bff}
    \end{equation}

    Using the triangle inequality, Eq.\ref{eq:step1ff}, and Eq.\ref{eq:step2bff} gives $\sup_{x\in\domain} \norm{h(x) - y_\m^\mask(x)} \leq \epsilon$ with probability $1-\delta$.


    
\end{proof}

\textbf{Connections with Malach et al. 2020:} As mentioned in the main text, the previous lemma is closely related to past results on the SLTH over units for MLPs with one hidden layer. In Theorem 3.2 of \cite{malach2020proving}, it is proven that one can match the performance of a random feature model (an MLP where only the output layer is trained) by scaling the output of a mask-learned network. This theorem differs from ours in three important ways. First, it proves a result for a potentially different class of activation functions, second, it requires a scaling of the output of the network--unlike our proof which does not require this--and, third, it compares the performance of mask-learned networks to random feature models, rather than directly proving that mask-learned networks can approximate wide classes of functions. We believe that one should be able to achieve a result similar to ours by combining Malach et al.'s Theorem 3.2 with Theorem 1 of \cite{rahimi2008weighted} (or a similar result on learning with random networks), but we leave the details of this to future studies.

\mainff*

\begin{proof}
    Observe that, once we have choosen an $m$ satisfying the desiderata of Lemma \ref{lem:binomial-approach-app}, because $\phi$ is assumed to be $\parambound$-bias-learning, $m$ is some finite value and all variables that make up the input of $\phi$ are bounded, we can implement the mask by setting $b_i$ to be very negative for every $i$ such that $\mask_i=0$. For every $b_i$ such that $\mask=1$ we simply leave $b_i$ at its original randomly chosen value.
\end{proof}

\begin{cor}
    Assume $\din = \dout$, that is, the output and input spaces are the same. Then the results of Lemma \ref{lem:binomial-approach-app}  and Theorem \ref{lem:ffmain} also hold for res-nets; that is, networks whose output is of the form $x + y_m^\mask(x)$.
\end{cor}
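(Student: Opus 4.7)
The plan is to reduce the res-net approximation problem for $h$ to a non-residual approximation problem for a shifted target, after which Lemma \ref{lem:binomial-approach-app} and Theorem \ref{lem:ffmain} apply verbatim.

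First, I would introduce $\tilde h:\domain\to\R^\dout$ defined by $\tilde h(x) = h(x) - x$. Since $h$ is continuous and, under the assumption $\din = \dout$, the identity map is continuous from $\domain \subset \R^\din$ into $\R^\dout$, the difference $\tilde h$ is continuous on the compact set $\domain$. Thus $\tilde h$ satisfies the hypotheses of Lemma \ref{lem:binomial-approach-app} (and, after the bias-masking step, those of Theorem \ref{lem:ffmain}).

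Next, for the given $\epsilon > 0$ and $\delta \in (0,1)$, I would apply Lemma \ref{lem:binomial-approach-app} to $\tilde h$ to obtain a hidden-layer width $\m$ and, with probability at least $1-\delta$, a mask $\mask \in \{0,1\}^\m$ such that
\begin{equation*}
\int_\domain \norm{\tilde h(x) - y_\m^\mask(x)}\, dx \leq \epsilon.
\end{equation*}
Rewriting $\tilde h(x) - y_\m^\mask(x) = h(x) - \bigl(x + y_\m^\mask(x)\bigr)$ yields
\begin{equation*}
\int_\domain \norm{h(x) - \bigl(x + y_\m^\mask(x)\bigr)}\, dx \leq \epsilon,
\end{equation*}
which is exactly the res-net analogue of Lemma \ref{lem:binomial-approach-app}. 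The strengthening to a Theorem \ref{lem:ffmain}-style statement is then automatic: one invokes the identical final step from the proof of Theorem \ref{lem:ffmain}, setting the bias of each unit with $\mask_i = 0$ to a sufficiently negative value to realize the mask, which is possible because $\phi$ is $\parambound$-bias learning and the preactivations are uniformly bounded over the randomly sampled (but bounded) weights and the compact input $\domain$.

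There is no real obstacle here; the skip connection $+x$ simply absorbs the identity component of the target, leaving an unchanged inner approximation problem. The only minor point worth flagging is that $\tilde h$ inherits continuity and boundedness on $\domain$ for free from the hypotheses on $h$ and the compactness of $\domain$, so no extra assumption on the target is needed, and the construction of the random network $y_\m^\mask$ is completely oblivious to whether a residual connection is wrapped around it.
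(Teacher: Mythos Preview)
Your proposal is correct and follows essentially the same approach as the paper: shift the target by the identity to reduce to the non-residual case, then invoke Lemma~\ref{lem:binomial-approach-app} and the bias-masking step of Theorem~\ref{lem:ffmain} verbatim. (Your sign convention $\tilde h = h - x$ is in fact the right one for the stated res-net form $x + y_\m^\mask(x)$; the paper's ``$h(x)+x$'' appears to be a minor slip.)
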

\begin{proof}
    This follows by observing that $h(x) + x$ is also a continuous function and then replacing $h(x)$ with $h(x) + x$ in Eq.\ref{eq:binomial-approach-app} and rearranging.
\end{proof}



\textbf{Remark:} While the error can be made arbitrarily small, the limit of zero error itself is undefined. This is because our proof relies on first approximating the given smooth function with a neural network with all parameters tuned and then approximating this second network using bias-learning to pick-out a matching sub-network from a large random reservoir; the probability of perfectly matching the fully tuned network with the bias-learned network is zero. This could be addressed by using an integral representation for continuous functions instead of directly using a finite-width neural network to approximate the given function (see e.g. \cite{rahimi2008weighted, li2023powerful}). As one will see below, this remark also applies to the recurrent neural network result.

\subsection{Proof from Section \ref{sec:rnn-theory}}
\label{sec:rnn-theory-proofs}

Analogous to the section containing the feed-forward proofs, we first state and prove a lemma which comprises the core of the proof for recurrent neural networks. This lemma shows that one can achieve universal approximation in the $L^{\infty}$ norm over trajectories sense (see section \S\ref{sec:rnn-theory}) with high probability using masking in a randomly initialized RNN, and in this way provides a proof of the SLTH over units for RNNs. The proof of the main theorem in this section then follows quite straightforwardly.

\begin{lem}

    Consider a discrete time, partially observed dynamical system of the form of, and satisfying the same conditions as, the one in Eq.\ref{eq:ds}. Let $0 < T < \infty$, $x_t\in\domain_x$ $\forall t$, $\epsilon > 0$ and $\delta \in (0, 1)$. Then we can find an RNN with appropriately chosen initial conditions and a layer width $\m\in\N$ such that with probability at least $1 - \delta$ $\exists \mask \in \{0, 1\}^\m$ satisfying the following: 

    \begin{equation}
        \sup_{z_0\in\domain_r, x_{0:(T-1)}\in\domain_{\vec{x}}}\sum_{t=1}^T\norm{y_t - y_{t}^\mask} < \epsilon.
        \label{eq:mainrnn-app}
    \end{equation}

    \label{lem:mainrnn-app}

\end{lem}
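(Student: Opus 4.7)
The plan is to follow the same two-step template as Lemma \ref{lem:binomial-approach-app}. First I would invoke a universal-approximation theorem for discrete-time RNNs (e.g.\ \cite{schafer2006recurrent}) to build a deterministic target network $\mathcal{R}_1$ of some finite width $n$ with parameters $\theta^* = \{W^*, B^*, C^*, b^*\}$ and an initial state $r_0^{(1)}$, whose finite trajectory satisfies $\sum_{t=1}^T \|Q z_t - C^* r_t^{(1)}\| \leq \epsilon/2$. The slightly larger compact set $\tilde{\domain}_z \supset \domain_z$ in the hypotheses is exactly what guarantees that the approximating trajectory remains in a region where $F$ is continuous and $\phi$ can be taken Lipschitz, and the $\gamma$-parameter-bounding property of $\phi$ lets me assume every scalar entry of $\theta^*$ has magnitude at most $\gamma < R$.

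Second, I would approximate $\mathcal{R}_1$ by a sub-network of the random wide RNN $\mathcal{R}_2$ selected by a mask $\mathcal{M}$ of $n$ units whose scalar parameters are uniformly within a small tolerance $\varepsilon$ of the entries of $\theta^*$. Setting $r_0$ to be zero on masked-out coordinates and to carry $r_0^{(1)}$ on the surviving coordinates forces the masked trajectory to be governed by the sub-network Eq.\ \ref{eq:two}, exactly as in the construction preceding that equation. The existence of such a sub-network with probability at least $1-\delta$, for $m$ sufficiently large, is an extension of the argument behind Lemma \ref{lem:supp1}: I would partition the $m$ random units into $n$ equal-size blocks and construct the representatives $i_1, \dots, i_n$ sequentially, where $i_j$ is drawn from block $j$ and is required to have its $B$-row, $C$-column and bias within $\varepsilon$ of the $j$-th entries of $B^*, C^*, b^*$ and, additionally, its recurrent couplings to and from the previously chosen $i_1,\dots,i_{j-1}$ (together with its self-coupling) within $\varepsilon$ of the corresponding entries of $W^*$. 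Because the parameters relevant at step $j$ are independent across candidates in block $j$ and independent of the previous choices, the conditional per-step success probability factors into $d_{\text{in}}+d_{\text{out}}+1$ window-probabilities for $(B, C, b)$ and $2j-1$ window-probabilities for the recurrent entries; a binomial concentration argument plus a union bound over $j = 1,\dots,n$ then turns ``block size sufficiently large'' into ``probability at least $1-\delta$''.

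The final step is to convert the uniform parameter closeness $\|\theta^* - \theta^{\mathcal{M}}\|_\infty < \varepsilon$ into trajectory closeness. I would induct on $t$: writing $r_t^{(1)} - r_t^{\mathcal{M}}$ as the sum of a ``parameter-gap'' term (bounded by a constant multiple of $\varepsilon$ via the Lipschitz continuity of $\phi$ on the compact reachable set together with the bounds on $r_{t-1}^{(1)}$, $x_{t-1}$, $W^*$, $B^*$, $b^*$) and a ``state-gap'' term (a Lipschitz factor depending on $\alpha, \beta, K_\phi, \gamma, n$ applied to $\|r_{t-1}^{(1)} - r_{t-1}^{\mathcal{M}}\|$), one obtains a Gronwall-type recursion whose iterate is bounded by a geometric expression in $T$. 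Since $T$ is finite, $\varepsilon$ can be chosen small enough to make the cumulative readout error below $\epsilon/2$, and the triangle inequality combined with the first step then yields Eq.\ \ref{eq:mainrnn-app}. The main obstacle I expect is the sub-network existence in step two: unlike the feed-forward case, where matching proceeds unit-by-unit, the recurrent matrix couples the choices of different representatives, so the probability of a good sub-network does not decouple cleanly across blocks. The sequential construction above is what preserves the independence needed to make the counting tractable, and it forces $m$ to scale substantially worse in $n$ than in the feed-forward setting.
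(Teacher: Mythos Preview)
Your overall strategy matches the paper's proof: approximate the dynamical system by a finite-width target RNN (Proposition~\ref{prop:rnn-universal-approx}), find a masked sub-network of the wide random RNN whose scalar parameters are uniformly $\varepsilon$-close to those of the target, and propagate the parameter gap into a trajectory gap by a discrete Gronwall induction (Lemma~\ref{lem:boundrnn}); the initial-condition handling and the triangle-inequality finish are also the same.

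Where you diverge is in the sub-network existence argument, and here you have worked harder than necessary. The obstacle you flag---that the recurrent matrix couples the representatives so the success probability does not decouple across blocks---does not arise in the paper. Lemma~\ref{lem:supp1} is already stated with $W^*\in\R^{n\times n}$ included, and its proof partitions the $m$ units into $m/n$ \emph{contiguous blocks of size $n$}, not into $n$ blocks of size $m/n$ as you propose. For two distinct contiguous blocks the restricted $n\times n$ recurrent submatrices occupy disjoint index sets of the big random $W$, so the events ``block $k_1$ matches $\theta^*$'' are independent across $k_1$ and the product bound $[1-(\epsilon/\bar\parambound)^\nu]^{m/n}$ goes through with no sequential conditioning; the paper simply invokes Lemma~\ref{lem:supp1} verbatim with the $\varepsilon$ of Eq.~\ref{eq:2-ds}. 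Your sequential construction is nonetheless correct and not wasted effort: because the $j$-th representative must match only $2j-1$ recurrent entries rather than all $n^2$ at once, it yields a bound on $m$ with exponent $O(n)$ in $\bar\parambound/\epsilon$ instead of the paper's $O(n^2)$, a genuine tightening of the width scaling flagged in Remark~2 following Lemma~\ref{lem:supp1}.
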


\begin{proof}

    It is well known that we can arbitrarily approximate this dynamical system with an RNN \cite{schafer2006recurrent}; we provide a simple proof of this in Proposition \ref{prop:rnn-universal-approx}. In particular, for arbitrary $\epsilon > 0$ we can find an RNN of the form in Eq.\ref{eq:rnn}, with hidden layer width $\n \in \N$ and output $\hat{y}$, satisfying:
    
    \begin{equation}
        \sup_{z_0\in\domain_r, x_{0:(T-1)}\in\domain_{\vec{x}}}\sum_{t=1}^T\norm{\ya_t - y_t} < \frac{\epsilon}{2},
        \label{eq:1-ds}
    \end{equation}




    Consider an $n$-width RNN of the form of Eq.\ref{eq:rnn} with parameter-bounding activation functions and with initial conditions $r_0(z_0)$, for $z_0 \in \domain_z$, determined by Eq.\ref{eq:r0}. This will result in $y_t = y_t(z_0,x_{0:(t-1)}, \params)$ being a continuous function in all of its arguments. Moreover, all of its arguments are from compact sets: $\domain_z$ is compact, $\domain_{\vec{x}}$ is compact, and $\params$ can be chosen to be on the hypercube with half-edges of length $\bar\parambound$. Thus $y_t$ admits a Lipschitz constant $K_n$. We make use of this below.
    
    Let the parameters of the above-defined approximating RNN, from Eq.\ref{eq:1-ds}, be given by $\params^* = \{A^*, W^*, B^*, b^*\}$. Then by Lemma \ref{lem:supp1} we can find a random RNN of hidden width $m$ and with parameters $\theta$ such that a mask, $\mask$, exists satisfying 
    
    \begin{equation}
        |\params^*_i - \params_i^\mask| < \varepsilon = \frac{\epsilon}{2K_nT\n(\n + \dout + \din + 1)},
        \label{eq:2-ds}
    \end{equation}
    
    for all $i$ with probability at least $1-\delta$. 
    Then, by the Lipshitz property of $y_t$, we get

    \begin{align}
        \sum_{t=1}^T\norm{y_t^\mask - \hat{y}_t} &\leq \sum_{t=1}^T K_n\big(\norm{\params^\mask - \params^*} + \norm{x_{0:(t-1)} - x_{0:(t-1)}} + \norm{z_0 - z_0}\big) \\
        &\leq K_nT\n(\n+\dout+\din+1)\varepsilon.
        \label{eq:3-ds}
    \end{align}

    This follows from $y_t^\mask = y_t(r_0^\mask, x_{0:(t-1)}, \params^\mask)$--where we have defined $r_0^\mask$ to be equal to ${r_0}_i$ for all $i$ such that $\mask_i=1$ and $0$ for all other indices--and the fact that both $r_0$ and $r_0^\mask$ are continuous functions of $z_0$. This bound is independent of both inputs and initial condition, and, for any parameters within the hypercube defined by $\bar\parambound$, holds for all $z_0 \in \domain_z$ and inputs $x_{0:(T-1)}\in \domain_{\vec{x}}$. By the continuity assumptions on the activation function, $f(z_0, x_{0:(t-1)}) \equiv \sum_{t=1}^T\norm{y_t^\mask - \hat{y}_t}$ is a continuous function of $z_0$ and $x_{0:(T-1)}$ for any $\params^*$ and $\params^\mask$ meaning that, analogous to the feed-forward case studied previously, we can use the extreme value theorem to find maximal values $z_0^\star \in \domain_z$ and $x_{0:(t-1)}^\star \in \domain_{\vec{x}}$ such that $\sup_{z_0\in\domain_z, x_{0:(t-1)}\in\domain_{\vec{x}}}f(z_0, x_{0:(t-1)}) = f(z_0^\star, x_{0:(t-1)}^\star)$. It thus follows that $\sup_{z_0\in\domain_r, x_{0:(T-1)}\in\domain_{\vec{x}}}\sum_{t=1}^T\norm{y^\mask - \ya_t} < \frac{\epsilon}{2}$, by Eq's \ref{eq:2-ds} and \ref{eq:3-ds}. The triangle inequality along with Eq.\ref{eq:1-ds} completes the proof.
    
\end{proof}

\mainrnn*

\begin{proof}
    This follows directly from Lemma \ref{lem:mainrnn-app}, by observing that one can replace the mask by simply setting biases to some sufficiently low value.
\end{proof}


\subsection{Supplementary Lemmas}
\label{sec:supp-lemmas}

The following result is well known in the literature; see e.g. Proposition 1 of \cite{leshno1993multilayer}.
\begin{prop}
    For any $\epsilon > 0$ $\exists n \in \N$ s.t.

    \begin{equation}
        \sup_{x\in\domain}|| h(x) - y_n(x, \params^*) || \leq \epsilon 
        \label{eq:step1}
    \end{equation}
    \label{prop:universal-approx}
\end{prop}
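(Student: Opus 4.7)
The plan is to derive Proposition \ref{prop:universal-approx} directly from the classical universal approximation theorem for non-polynomial activations \cite{leshno1993multilayer, hornik1993some}, noting that it is essentially a restatement of Definition \ref{def:suitable-activation} specialized to the activation in play throughout the appendix.

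The first step is to observe that the activation $\phi$ used in the surrounding proofs is assumed to be $\parambound$-bias-learning, which by Definition \ref{def:bias-learning} forces it to be $\parambound$-parameter bounding, which in turn forces it to be a suitable activation in the sense of Definition \ref{def:suitable-activation}. Since Definition \ref{def:suitable-activation} is \emph{itself} the assertion that for any continuous $h$ and any $\epsilon > 0$ there exist $n \in \N$ and parameters $\params$ such that $\int_\domain \norm{h(x) - y_n(x, \params)} dx \leq \epsilon$, the proposition follows immediately upon renaming the approximating parameters to $\params^*$.

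For a self-contained derivation that does not unwind back to the definition, the plan is to invoke the Leshno--Lin--Pinkus--Schocken density theorem \cite{leshno1993multilayer}: for $\phi$ continuous and non-polynomial, single-hidden-layer networks of the form in Eq.\ref{eq:MLP} are dense in $C(\domain)$ under the uniform norm on the compact set $\domain$. Given $\epsilon > 0$, set $\epsilon' = \epsilon / \max(|\domain|, 1)$, where $|\domain| = \int_\domain dx < \infty$ by compactness. Density then yields $n \in \N$ and parameters $\params^*$ with $\sup_{x \in \domain} \norm{h(x) - y_n(x, \params^*)} < \epsilon'$, and integrating this pointwise bound over $\domain$ gives $\int_\domain \norm{h(x) - y_n(x, \params^*)} dx \leq |\domain| \epsilon' \leq \epsilon$.

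The main obstacle is essentially cosmetic: there is no substantive mathematical difficulty, which is precisely why the authors label the statement as well known and cite the classical reference. The only subtlety worth flagging is the passage from $L^\infty$ convergence (delivered by the classical density result) to the $L^1$ convergence (required by the proposition), but this is trivial on a compact domain and amounts to absorbing the finite constant $|\domain|$ into the target accuracy before appealing to density.
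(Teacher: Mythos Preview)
Your proposal is correct and matches the paper's approach: the paper does not give a proof at all, stating only that the result is well known and citing Proposition~1 of \cite{leshno1993multilayer}. Your unwinding back to Definition~\ref{def:suitable-activation} and your remark on passing from $L^\infty$ to $L^1$ via the finite measure $|\domain|$ simply make explicit what the paper leaves as a citation.
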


\begin{cor}
    The above holds if we restrict the output weight matrix of the neural network to have rank equal to the output dimension.
\end{cor}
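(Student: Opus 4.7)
The plan is to augment the universal approximator supplied by Proposition \ref{prop:universal-approx} with $\dout$ additional hidden units whose sole purpose is to raise the rank of the output matrix, while their contribution to the network output is made arbitrarily small.

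First, apply Proposition \ref{prop:universal-approx} with tolerance $\epsilon/2$ to obtain a width $\n \in \N$ and parameters $\params^* = \{A^*, B^*, b^*\}$, with $A^* \in \R^{\dout \times \n}$, satisfying $\int_\domain \|h(x) - y_n(x, \params^*)\|\,dx \leq \epsilon/2$. Then construct an augmented network $\tilde y$ of width $\n + \dout$ by leaving the first $\n$ units unchanged and appending $\dout$ extra units with any bounded choice of input weights and biases. Take the augmented output matrix to be $\tilde A = [\, A^* \mid \eta I_\dout \,] \in \R^{\dout \times (\n + \dout)}$ for some $\eta > 0$ to be chosen later. The block $\eta I_\dout$ has full row rank, so $\mathrm{rank}(\tilde A) = \dout$ regardless of the rank of $A^*$.

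The remaining step is to pick $\eta$ small enough to keep the total error within $\epsilon$. The pre-activations of the extra units lie in a compact set, since $x$ ranges over the compact domain $\domain$ and the additional weights and biases are fixed; moreover the suitable activations of interest are locally bounded (continuous $\phi$ such as ReLU are bounded on compacta and Heaviside-type $\phi$ are globally bounded), so there exists $M_\phi < \infty$ with $|\phi| \leq M_\phi$ on the relevant domain. The extra units therefore contribute at most $\eta \dout M_\phi$ to $\|\tilde y(x) - y_n(x, \params^*)\|$ pointwise, hence at most $\eta \dout M_\phi |\domain|$ to the $L^1$ error. Choosing $\eta \leq \epsilon / (2 \dout M_\phi |\domain|)$ and applying the triangle inequality gives $\int_\domain \|h(x) - \tilde y(x)\|\,dx \leq \epsilon$, while $\mathrm{rank}(\tilde A) = \dout$ as required.

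There is essentially no hard step here; the argument is bookkeeping that combines two observations: appending any full-row-rank block to $A^*$ forces the enlarged matrix to have rank $\dout$, and that block may be scaled down arbitrarily. The only subtlety worth flagging is the local boundedness of $\phi$, which is automatic for every suitable activation considered in this paper. If one wished to handle pathologically unbounded suitable activations, the same scheme still works after first truncating the extra units' pre-activations into a region where $\phi$ is controlled, but this refinement is not needed for the activations of interest here.
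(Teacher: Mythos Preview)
Your proof is correct but takes a different route from the paper. The paper's one-line argument keeps the hidden-layer width $\n$ fixed and simply perturbs the given output matrix $A^*$ to a nearby matrix of rank $\dout$, invoking the density of full-rank matrices in $\R^{\dout\times\n}$; since the network output is linear in $A$, a small perturbation of $A^*$ changes the $L^1$ error by an arbitrarily small amount (relying on the same local boundedness of $\phi$ that you invoke). Your approach instead widens the network by $\dout$ units and appends an $\eta I_\dout$ block, forcing full row rank by construction. Both arguments are equally valid and use the same underlying continuity in the output weights; the density route is terser and does not increase the width, while your augmentation is more explicit and sidesteps the implicit requirement $\n \ge \dout$ that the density argument needs (if the original approximator had $\n < \dout$, the paper's argument would require a widening step as well).
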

\begin{proof}
    This is because the set of full rank matrices is dense in $\R^{m\times n}$ for $m,n\in\N$.
\end{proof}

Consider matrices $W^*\in\R^{\n\times\n}$, $B^*\in\R^{\n\times\din}$, $A^*\in\R^{\dout\times\n}$, and vector $b^*\in\R^\n$. We can vectorize and concatenate their elements into the single long vector $\params\in\R^\nup$, where $\nup = \n(\n + \din + \dout + 1)$. Assume that $|\params_i^*|<\parambound$ for all $i$.

Next, construct $W\in\R^{\dhidrnn\times\dhidrnn}$, $B\in\R^{\dhidrnn\times\din}$, $A\in\R^{\dout\times\dhidrnn}$, and vector $b\in\R^\dhidrnn$, by sampling each element randomly from a  uniform distribution on $[-\bar\gamma, \bar\gamma]$ where $\bar\gamma = \gamma + \Delta\gamma$ for $\Delta\gamma > 0$. We analogously group these into a single vector, $\params\in\R^{\dhidrnn(\dhidrnn + \din + \dout + 1)}$ Observe that for each $\mask\in\{0,1\}^\dhidrnn$, we can construct sub-matrices of $W$, $B$, $A$, and sub-vector of $b$ by deleting column and row pairs in $W$, rows in $B$, columns in $A$, and elements of $b$ whose indices correspond to $i\in\{1,\dots,\dhidrnn\}$ such that $\mask_i=0$. For a given $\mask$, we define $\params^\mask$ to be the vector constructed by flattening and concatenating these sub-matrices and vector. We then have the following lemma:

\begin{lem}
    For $\params^*$, defined above, and arbitrary $\varepsilon > 0$, $\delta\in(0, 1)$, we can find $\m > \n$ such that with probability at least $1-\delta$ $\exists\mask\in\{0,1\}^\m$ with only $\n$ non-zero elements such that $|\params_i^* - \params_i^\mask| < \varepsilon$ for all $i\in\{1,\dots,\nup\}$. In particular, any $\m \geq \frac{\n\log\delta}{\log[1-(\frac{\epsilon}{\bar\parambound})^\nup]}$ will satisfy the result, where $\epsilon = \min(\varepsilon, \Delta\gamma)$.
    \label{lem:supp1}
\end{lem}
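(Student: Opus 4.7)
The plan is to show that if we partition the $\m$ random hidden units into $K := \lfloor \m/\n \rfloor$ disjoint blocks of size $\n$, then with high probability at least one block already yields an acceptable mask. First, for each block I will lower bound the probability that its $\nup$ associated parameters simultaneously land within $\varepsilon$ of the corresponding target values; then, by independence of the parameters across disjoint blocks, the probability that no block succeeds is at most $(1-p)^K$, which drives the final width estimate.

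The per-coordinate match probability is where the reduced precision $\epsilon := \min(\varepsilon, \Delta\parambound)$ enters. Each target parameter satisfies $|\params_i^*| < \parambound$, so the window $[\params_i^* - \epsilon,\, \params_i^* + \epsilon]$ is always contained in the sampling support $[-\bar\parambound,\bar\parambound]$; the uniform hit probability is therefore exactly $\epsilon/\bar\parambound$. Within one block, the $\nup = \n(\n + \din + \dout + 1)$ associated random scalars (namely the $\n\times\n$ submatrix of $W$, the $\n$ rows of $B$, the $\n$ columns of $A$, and the $\n$ biases) are all iid uniform on $[-\bar\parambound,\bar\parambound]$, so the probability that this block matches $\params^*$ coordinate-by-coordinate is $p = (\epsilon/\bar\parambound)^\nup$.

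It remains to invoke independence across the $K$ blocks and solve. Two disjoint index sets $S, S' \subset \{1,\dots,\m\}$ correspond to disjoint entries of $W$, $B$, $A$, and $b$: in particular, although both blocks index into the same recurrent matrix $W$, one sees only entries in $S\times S$ and the other only in $S'\times S'$, so the submatrices share no random coordinates. Hence the $K$ per-block match events are mutually independent, and $P(\text{no block matches}) \leq (1-p)^K$. Requiring this to be at most $\delta$ and using $K = \lfloor \m/\n\rfloor$ yields the claimed width $\m \geq \n\log\delta / \log\bigl[1 - (\epsilon/\bar\parambound)^\nup\bigr]$, and the mask is simply the indicator of the first successful block.

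The main obstacle is the cross-block independence argument for the recurrent weights, since a naive union bound ranging over \emph{all} $\binom{\m}{\n}$ possible masks would involve heavily overlapping index subsets and hence strongly correlated match events. Restricting to the much smaller family of disjoint blocks sacrifices a combinatorial factor (down to $\lfloor \m/\n\rfloor$) but buys clean independence, which is precisely what is needed for the exponential-in-$K$ failure bound; after that, the arithmetic reduces to elementary properties of $\log$ and yields the explicit width in the statement.
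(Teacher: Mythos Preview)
Your proposal is correct and follows essentially the same approach as the paper: restrict attention to $\lfloor \m/\n\rfloor$ disjoint ``block'' masks, observe that the per-block match probability is exactly $(\epsilon/\bar\parambound)^\nup$ once $\epsilon=\min(\varepsilon,\Delta\parambound)$ forces each $\epsilon$-window inside the sampling support, use independence across disjoint blocks to bound the failure probability by $\bigl[1-(\epsilon/\bar\parambound)^\nup\bigr]^{\lfloor \m/\n\rfloor}$, and solve for $\m$. Your justification of cross-block independence for the recurrent submatrices (indices in $S\times S$ versus $S'\times S'$) is slightly more explicit than the paper's, but the argument is otherwise identical.
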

\begin{proof}
    In what follows we set $\epsilon = \min(\varepsilon, \Delta\gamma)$. This simplifies the below probability bound that we derive because it means the probability of falling within an $\epsilon$ window of a desired parameter will not change, even if the desired parameter is very close to its bound, $\pm\gamma$. We will refer to the event that the desiderata of the lemma are satisfied for $\epsilon$, rather than $\varepsilon$, as $A_1$; that is: $\exists\mask\in\{0,1\}^\dhidrnn$ with only $\n$ non-zero elements such that $|\params_i^* - \params_i^\mask| < \epsilon$ for all $i\in\{1,\dots,\n\}$. The event that the desiderata are not satisfied is $A_1^c$.

    Assume that $m^\star=kn$ for some $k\in\N^+$. Consider the \emph{`block' mask} $\mask^{k_1}$ s.t. $\mask_i^{k_1} = 1$ only for $i\in\{(k_1-1)n + 1,\dots,k_1n\}$, with $0 < k_1 \leq k$. Note that the $\n$ elements selected by these block masks are non-overlapping for two different $k_1$. Let event $A_2$ be the event that there is a block mask that occurs satisfying the desiderata of the lemma with error $\epsilon$. Clearly $A_2 \subset A_1 \implies A_1^c \subset A_2^c \implies P(A_1^c) \leq P(A_2^c)$. $A_2^c$ is the probability that there is no block mask satisfying the desiderata. Observe that

    \begin{align}
        P(A_2^c) &= P\bigg[\bigcap_{k_1=1}^k\{k_1^{th}\:\mathrm{block} \: \mathrm{mask} \: \mathrm{doesn't} \: \mathrm{work} \}\bigg] = \prod_{k_1=1}^kP(\{k_1^{th}\:\mathrm{block} \: \mathrm{mask} \: \mathrm{doesn't} \: \mathrm{work} \}) \nonumber \\
        &= \prod_{k_1=1}^k 1 - P(\{k_1^{th}\:\mathrm{block} \: \mathrm{mask} \: \mathrm{works} \}) = \bigg[1 - \Big(\frac{\epsilon}{\bar\parambound}\Big)^\nup\bigg]^{\frac{m^\star}{n}},
        \label{eq:prob-bound}
    \end{align}

    which follows from the fact that the elements of the matrices are independently sampled and the elements corresponding to sub-matrices selected by a given block mask are independent of those associated with another block mask. By making $\dhidrnn^\star$ \emph{very} large we can make $P(A_2^c)$ arbitrarily small. Because $P(A_1^c) \leq P(A_2^c)$--and the desiderata of the lemma with error $\epsilon$ are not satisfied solely on $A_1^c$--the result follows by selecting $\dhidrnn^\star = \dhidrnn$ such that $P(A_2^c) \leq \delta$. We thus see that the probability of finding a sufficient mask occurs with probability at least $1 - \delta$. Lastly, because we have found a mask that satisfies per-parameter error $\epsilon$, and because $\epsilon \leq \varepsilon$, we have proved the lemma.
\end{proof}

\textbf{Remark 1:} We note that, in Eq.\ref{eq:prob-bound}, $\n$ will likely also depend implicitly on $\parambound$. If $\parambound$ is very small then we will need to stack many ReLUs on top of each other to attain a large enough dynamic range to approximate the desired function (see \S \ref{sec:fftheory}), leading to a larger number of units. Conversely, if $\parambound$ is very large we will need to sample a large number of units before we get parameters appropriately close to the desired subnetwork configuration. This suggests the existence of some sweet spot in the value $\parambound$, which we leave for future work to explore.

\textbf{Remark 2:} We observe that this bound appears to be very weak. For example, if one wished to use it to find a masked network to match an MLP with input, hidden, and output dimensions of only $1$, $3$, and $1$ respectively, with a per-parameter error of $\epsilon=0.05$ an error probability of $\delta = 0.1$, and with $\gamma=0.1$, this bound would suggest we need a hidden layer of $m \geq 8.34\times 10^{12}$ neurons in the bias learning network. In light of the numerical experiments, it is clear that while the math here provides proofs of existence for bias learning it massively over-estimates the layer widths required in practice, and thus does not say anything useful about the hidden layer scaling required by bias-learning. 

For the following proposition we consider the discrete time dynamical system that we wish to approximate to be as in Eq.\ref{eq:ds}.

\begin{prop}
\label{prop:rnn-universal-approx}
    For finite $0 < T < \infty$, $\epsilon > 0$, and any $|\alpha| < \infty, \beta > 0$ we can find an RNN of the style of Eq. \ref{eq:rnn} of hidden width $\n\in\N$ and a continuous mapping $r_0:\domain_z \mapsto \R^n$ for the initial value of the RNN such that:

    \begin{equation}
        \sup_{z_0\in\domain_z,x_{0:(T-1)}\in\domain_{\vec{x}}}\sum_{t=1}^T \norm{\hat{y}_t\big(r_0(z_0), x_{0:(T-1)}\big) - y_t(z_0, x_{0:(T-1)})} < \epsilon
    \end{equation}

    where $\hat{y}_t$ is the output of the RNN and $y_t$ is that of the dynamical system. 
    \end{prop}

    The main portion of this result is well known, see e.g. \cite{schafer2006recurrent}. For completeness, we provide an example proof below.
    
\begin{proof}
    In what follows, W.L.O.G we will assume that the error is smaller than $c$. We want to approximate the dynamical system:

    \begin{equation}
        z_{t+1} = F(z_t, x_t), \quad y_t = Cz_t, \quad z_0 \in \domain_z,
    \end{equation}

    defined, by assumption, on set $\tilde\domain_z = \{z_0 + c_0 : z_0\in\domain_z, \: \norm{c_0}<c\}$, where $\domain_z$ is an invariant set (see \S \ref{sec:rnn-theory}).

    We define the set:

    \begin{align}
        \domain_{zx} = \{[z + c_0 \:\: x] : z\in\domain_z, x\in\domain, \norm{c_0}<c\}.
    \end{align}

    Importantly, this set is compact given the compactness assumptions on $\domain$ and $\domain_z$. Also note that, since $F$ is assumed continuous, it will be $K_F$-Lipschitz on this compact set for some constant $K_F$. By the compactness just discussed and the continuity of $F$, we can use the corollary to Proposition \ref{prop:universal-approx} to find a neural network of hidden dimension $n \in \N$ that approximates $F$ with a maximum-rank output matrix, $A$. We write this neural network:

    \begin{equation}
        \hat{z} = -\alpha z + \beta A\phi(Wz + Bx + b) = \hat{F}(z, x),
    \end{equation}

    assuming $z \in \domain_z$ and $x \in \domain$, with $A \in \R^{\ns \times \n}$, $W \in \R^{\n \times \ns}$, $B \in \R^{\n \times \din}$, and $b \in \R^{\n}$. In particular, we can find arbitrary $\epsilon$ with $0 < \epsilon < c$ such that:

    \begin{equation}
        \norm{\hat{F}(z,x) - F(z, x)} < \varepsilon = \frac{\epsilon}{T\max(R_C\sum_{t=0}^{T-1}K^t_F, 1)},
    \end{equation}

     where $R_C=\norm{C}$. Fix $T \geq 1$. To prove that we can approximate the underlying dynamical system, we use induction starting at time $t=1$. The base case will be

    \begin{equation}
        \norm{\hat{z}_1 - z_1} = \norm{\hat{F}(z_0, x_0) - F(z_0, x_0)} \leq \varepsilon,
    \end{equation}

    by our choice of $n$ and initial condition, and that $[z_0, x_0] \in \domain_{zx}$. Importantly, this implies also that $\norm{\hat{z}_1 - z_1} < \varepsilon$. Because $\varepsilon < c$ this means that $[\hat{z}_1 \:\: x_1]^\top\in \domain_{zx}$.

    For $t=1$, $\varepsilon = \sum_{t'=0}^{t-1}K_F^{t'} \varepsilon$. We thus make the induction hypothesis that $\norm{\hat{z}_t - z_t} < \sum_{t'=0}^{t-1}K_F^{t'} \varepsilon$ and that $[\hat{z}_t \:\: x_t]^\top\in \domain_{zx}$. If $T=1$ we are finished. If $T > 1$ we assume $1 < t < T$ and use this hypothesis to prove the induction step:

    \begin{align}
        \norm{\hat{z}_{t+1} - z_{t+1}} &\leq \norm{\hat{F}(\hat{z}_t, x_t) - F(\hat{z}_t, x_t)} + \norm{F(\hat{z}_t, x_t) - F(z_t, x_t)} \\
        &\leq \varepsilon + K_F\norm{\hat{z}_t - z_t} = \varepsilon\sum_{t'=0}^tK_F^{t'} < \frac{c}{T}.
    \end{align}

    Because $\frac{c}{T}\leq c$, $[\hat{z}_{t+1} \:\: x_{t+1}]^\top\in \domain_{zx}$. Then

    \begin{equation}
        \sum_{t=1}^T \norm{\hat{y}_t - y_t} \leq R_C T\varepsilon\sum_{t=0}^TK_F^{t} \leq \epsilon.
    \end{equation}

    While we have approximated the dynamical system it is not yet in the standard rate-style RNN form. However, we can obtain the rate form by changing from tracking $\hat{z}$ to a different dynamical variable, $r_t\in\R^n$, that satisfies $\hat{z}_t = Ar_t$. We will make a brief detour to characterize this variable.
    
    Because $A$ is full rank, $\mathrm{col}(A) = \R^s$ so we can find an index $\nu \subset \{1,\dots, n\}$ such that $\{A_{:i}\}_{i\in\nu}$ forms a basis for $\R^s$. If we construct a matrix $A_\nu\in\R^{s\times s}$ whose columns are simply the basis vectors this matrix will have an inverse $A_\nu^{-1}$. We can then define the initial condition, $r_0$, element-wise as:

\begin{equation} {r_0}_i(z_0) = \begin{cases} 
      {A_\nu^{-1}}_{i:}z_0 & i\in\nu \\
      0 & i\not\in\nu.
   \end{cases}
   \label{eq:r0}
\end{equation}
    This function is clearly continuous and satisfies $Ar_0 = \sum_{i\in\nu}A_{:i}{r_0}_i = \sum_{i\in\nu}A_{:i}{A_\nu^{-1}}_{i:}z_0 = A_\nu A_\nu^{-1}z_0 = z_0$. If we then define $r_{t} = -\alpha r_{t-1} + \phi(WAr_{t-1} + Bx_{t-1} + b)$ for all $1 < t \leq T$ we see that $r_t$ is simply the state variable for an RNN of the style of Eq.\ref{eq:rnn}, and that it satisfies $\hat{z}_t = Ar_t$ for all $0 \leq t \leq T$. It follows that $\hat{y}_t = CAr_t \: \forall t$. Thus, this RNN approximates the original partially observed dynamical system in the sense described in section \S\ref{sec:rnn-theory}.

\end{proof}

\section{Methods for Numerical Results Section}

\setcounter{figure}{0} 
\renewcommand{\thefigure}{C.\arabic{figure}}
\renewcommand{\theHfigure}{C.\arabic{figure}}

\subsection{Methods for Section \ref{sec:numerical-results-1}}
\label{sec:methods1}
In this section all networks were single hidden layer FNNs.
For figure 1A, networks with widths of $64$, $128$, $256$, $512$, $1024$, $2048$, $4096$, $8192$, $16384$, and $32768$ were trained, with the width being indicated on the $x$-axis. For figures 1B and 1C, a width of $3.2\times10^4$ was used. 

For figure 1A, all networks were trained on FashionMNIST, with $5\times10^4$ training samples and $10^4$ test samples, using ADAM with a learning rate of $0.01$. Training was run for $20$ epochs with a batch size of $512$.

Xavier uniform initialization with a gain of $1.0$ was used for the fully trained networks, while the frozen weights for the bias-only networks were sampled from either a uniform distribution on $[-0.1, 0.1]$ or from a zero-mean Gaussian with standard deviation $\frac{1}{\sqrt{d}}$, where $d$ is the input dimension.

For Fig.\ref{fig:supp-fig1} a network with $3.2\times10^4$ units was trained on $2\times10^5$ random samples from a concatenation of the $8$ MNIST style datasets used in Figure \ref{fig:feed1}. Weights were initialized to Gaussians with mean $0$ and SD $\frac{1}{\sqrt{d}}$, biases were initialized uniformly on $[-0.01, 0.01]$, and ADAM with a learning rate of $1\times10^{-5}$, and batch size $512$ was used to optimized a cross entropy loss for $15$ epochs. The network reached test accuracies of 0.9924 (Ethiopic) 0.8624 (Fashion), 0.8883 (Kannada), 0.0000 (KMNIST), 0.9354 (MNIST), 0.9995 (Nko), 0.9979 (Osmanya), and 0.9910 (Vai). Thus, it failed to learn KMNIST, which we hypothesize to be due to interference with the other datasets. In this figure and in Fig.\ref{fig:feed1} we used the Kmeans from Scikit-Learn with 20 clusters. Before clustering/plotting we normalized the variances by dividing each $8$ dimensional vector of unit variances by the maximal value plus a small constant to avoid division by zero.

\subsection{Methods for Section \ref{sec:numerical-results-2}}
\label{sec:methods2}

In this section all networks were single hidden layer FNNs with $10^4$ ReLU units. Weights were each initialized to a Gaussian with mean $0$ and standard deviation of $1/28$--the inverse square root of the input dimension--and were left unchanged during training. 

Both bias and mask networks were trained on MNIST, with $5\times10^4$ training samples and $10^4$ test samples, using ADAM with a learning rate of $0.01$. Trained parameters were each initialized uniformly on $[-0.01, 0.01]$ (for the mask learned networks the bias vector was initialized in this way and left untrained), and training was run for $30$ epochs with a batch size of $512$. For bias learning, the trained parameters were the $10^4$ element bias vector, $b$; for mask learning, trained parameters were a $10^4$ element vector of gains $g$. 

The output of the $i^{th}$ unit in the mask-trained network was $\mathrm{ReLU}\big(W_{i:}\cdot x + b\big) \mathrm{Sigmoid}\big(\frac{g_i}{\tau}\big)$. Here, $x$ was the input, flattened, MNIST image, $W$ the random hidden weight matrix and $\tau$ an annealing parameter. Starting from $\tau_0=1$, at each epoch $\tau$ was decreased so that at epoch $k$ $\tau_k = c\tau_{k-1}$, with $c$ chosen such that at the final epoch $\tau_{30}=0.001$. This scheme was chosen so that by the end of training the sigmoid functions would be almost step functions, well-approximating the desired binary mask. We found that this worked better than immediately setting $\tau=0.001$. For testing, units were evaluated with the $\mathrm{Sigmoid}$ factor binarized ($\tau=\infty$). 

The null model used in Fig.\ref{fig:bias-mask}.B was calculated by drawing $10^4$ samples from two, independent, Bernoulli random variables whose probability of being ON (drawing a $1$) were equal to the mean fraction of ON units in the bias and mask-trained networks respectively.

\subsection{Methods for Section \ref{sec:numerical-results-3}}
\subsubsection{Cosine generation (Fig.~\ref{fig:autonomousDS}A-B)} We used $N=200$ hidden recurrent units with ReLU activations. Biases were initialized from $\mathcal{U}(0,1)$. The recurrent weights were sampled from $\mathcal{N}(0, N^{-1})$ and the output weights from $\mathcal{N}(0, N^{-2})$. The target period of the cosine was $25$ and the total duration to generate was $125$. Learning rates were $0.1$ for bias learning and $0.001$ for the fully-trained network; other parameters for the Adam optimizer were left at their default values in Pytorch. 

\subsubsection{Van der Pol oscillator (Fig.~\ref{fig:autonomousDS}C-D)} 
We used $675$ hidden ReLUs for the bias-learning network and $25$ for the fully-trained network. The recurrent weights were sampled from $\mathcal{N}(0, gN^{-1})$, where $g$ is the ``gain''; other weights were initialized as in the previous subsection. The Van der Pol oscillator obeyed $\ddot{x} = \mu(1 - x^2)\dot{x} - x$, for $\mu = 2$. Learning rates were $0.1$ for bias learning and $10^{-4}$ for the fully-trained network and other optimization parameters were as above. 

\subsection{Methods for Section \ref{sec:numerical-results-4}}
\label{sec:methods4}

\subsubsection{RNN architecture and hyperparameters}

The architecture is a single-layer one-dimensional input/output recurrent neural network with ReLU activations. The training optimizer was Adam with a learning rate of $0.001$ and a weight decay value of $0.1$. 

\subsubsection{Lorenz attractor}

The equations used to generate the partially-observed Lorenz attractor were the following:

\begin{align*}
    \frac{dx}{dt} = \sigma(y-x), \quad
    \frac{dy}{dt} = x(\rho-z)-y, \quad
    \frac{dz}{dt} = xy-\beta z
\end{align*}

where the initial point is $(0,1,0)$ and $\sigma$, $\rho$, and $\beta$ are $10$, $28$, and $\frac{8}{3}$, respectively. The trajectory was generated using Euler's method with a step size of $0.01$.

\subsection{Methods for Section \ref{sec:numerical-results-5}} 
We used $N=1,\!024$ hidden ReLU units in a vanilla RNN (Eq.~\ref{eq:rnn} with $\alpha=0$ and $\beta=1$). The input to the network was the 2D Cartesian position of the targets, which were 0.07 m away from the center of the workspace. The point-mass arm obeyed a discrete-time version of the following dynamics
\begin{align*}
\frac{d x}{dt} = \dot{x}(t), \quad
\frac{d \dot{x}}{dt}  = \frac{f(t)}{m}, \quad
 \frac{df}{dt} = \frac{u(t) - f(t)}{\tau_f} 
 \end{align*}
where $x$ is the 2D arm position, $\dot{x}$ its velocity and $f$ the applied force. The initial conditions were $x(0) = \dot{x}(0) = f(0) = 0$, i.e., the arm was initially at rest. The force was obtained from an exponential filtering of the controls $u(t)$ with time constant $\tau_f = 0.04$ s. The network generated the controls $u(t)$ via a linear readout of its activity: $u = Cr$. The input ($B$), recurrent ($W$) and output ($C$) matrices were initialized as: $B_{ij} \sim \mathcal{U}(-1/\sqrt{2}, 1/\sqrt{2})$, $W_{ij} \sim \mathcal{N}(0, 1.5625/N)$, and $C_{ij} \sim \mathcal{N}(0, 0.5/N)$. The loss function was
\begin{equation}\label{eq:model_objective_total}
L =  \sum_{k=0}^{K-1}  \frac{\| x^{(k)}_T - d_k \|^2}{\delta_p^2} +  \gamma_v \frac{\| \dot{x}^{(k)}_T \|^2}{\delta_v^2} + \gamma_f \frac{\| f^{(k)}_T \|^2}{\delta_f^2}, 
\end{equation}
where $d_k = D [\cos(2\pi k / K), \sin (2\pi k / K)]^\top$, $k = 0, \ldots, K-1$, represent the position of the $K=6$ peripheral targets at a distance $D=0.07$ m from the center target. Here, $x^{(k)}_T$, $\dot{x}^{(k)}_T$ and $f^{(k)}_T$ are the final position, velocity and force for the $k$th target ($T = 1 \mathrm{s} / 0.01 \mathrm{s} = 100$, where $0.01$ s was our integration time step). 
Therefore, the objective was to reach the target at the end of the trial (first term) with near-zero velocity (second term) and force (third term). Parameters $\delta_p=0.01$, $\delta_v=0.02$ and $\delta_f=0.08$ were used to rescale the position, velocity and acceleration terms. Hyperparameters $\gamma_v=0.2$, and $\gamma_f=0.04$  controlled the relative weight of the velocity and force costs with respect to the position loss. The learning rate of the standard Adam optimizer was set to $3\times 10^{-3}$ and training was stopped when a loss of $5 \times 10^{-3}$ was reached. 

\section{Training Time}

\setcounter{figure}{0} 
\renewcommand{\thefigure}{D.\arabic{figure}}
\renewcommand{\theHfigure}{D.\arabic{figure}}

We ran a brief experiment on training time, which we report as $mean\pm 1SD$ over 5 seeds. Training a fully-trained single-layer MLP with $10^4$ hidden units on MNIST for 5 epochs takes $69.17\pm0.22$ seconds. Training only biases for the same task and architecture takes $61.51\pm0.92$ seconds. Thus, for matched layer widths training biases with standard Pytorch implementations provides a slight advantage. Note that one can maintain performance and get faster training by simply reducing the width of the fully-trained network: a 100 hidden unit fully trained net takes $37.77\pm0.04$ on the same test.

\section{Supplementary Figures}

\setcounter{figure}{0} 
\renewcommand{\thefigure}{E.\arabic{figure}}
\renewcommand{\theHfigure}{E.\arabic{figure}}

\begin{figure}[hbt!]
  \centering
  \includegraphics[width=1\textwidth]{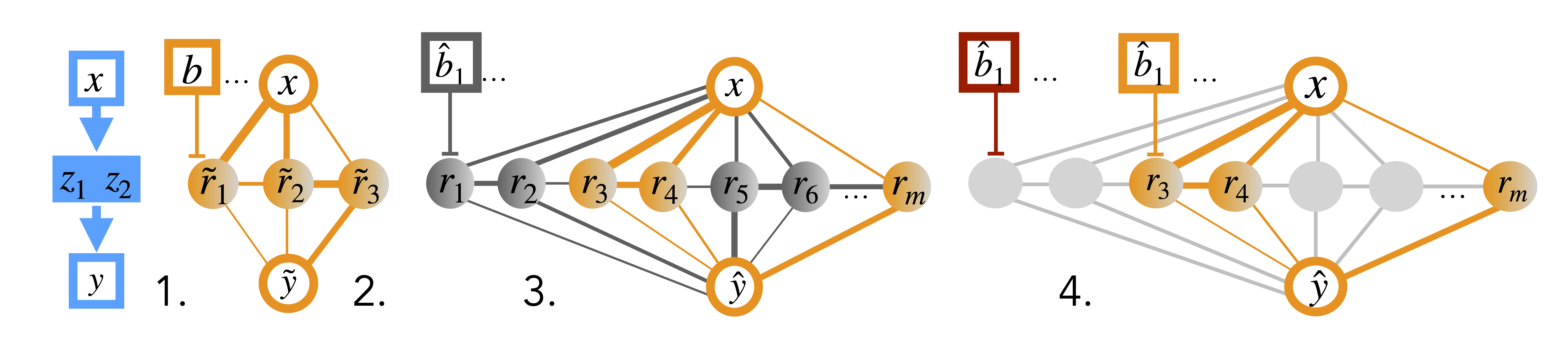}
  \caption{\textbf{Proof Intuition.} \textbf{Step 1.} Consider a smooth, partially observed Dynamical System (DS) on an invariant set, $\Omega$ (or a function $y(x)$ on a compact set $\Omega$).
\textbf{Step 2.} Let $0<t\leq T$. Approximate DS on $\Omega$ using RNN $\mathcal{R}_1$ (or approximate $y$ on $\Omega$ using FNN $\mathcal{N}_1$).
\textbf{Step 3.} Randomly initialize a larger RNN, $\mathcal{R}_2$ (or FNN $\mathcal{N}_2$). Check for a sub-network of units with parameters matching $\mathcal{R}_1$ ($\mathcal{N}_1$).
\textbf{Step 4.} Adjust biases outside the sub-network to be very negative--thus shutting off the units outside the sub-network--and those inside the sub-network to match those of $\mathcal{R}_1$ ($\mathcal{N}_1$).}
\label{fig:supp-proof-intuition}
\end{figure}

\begin{figure}[hbt!]
  \centering
  \includegraphics[width=1\textwidth]{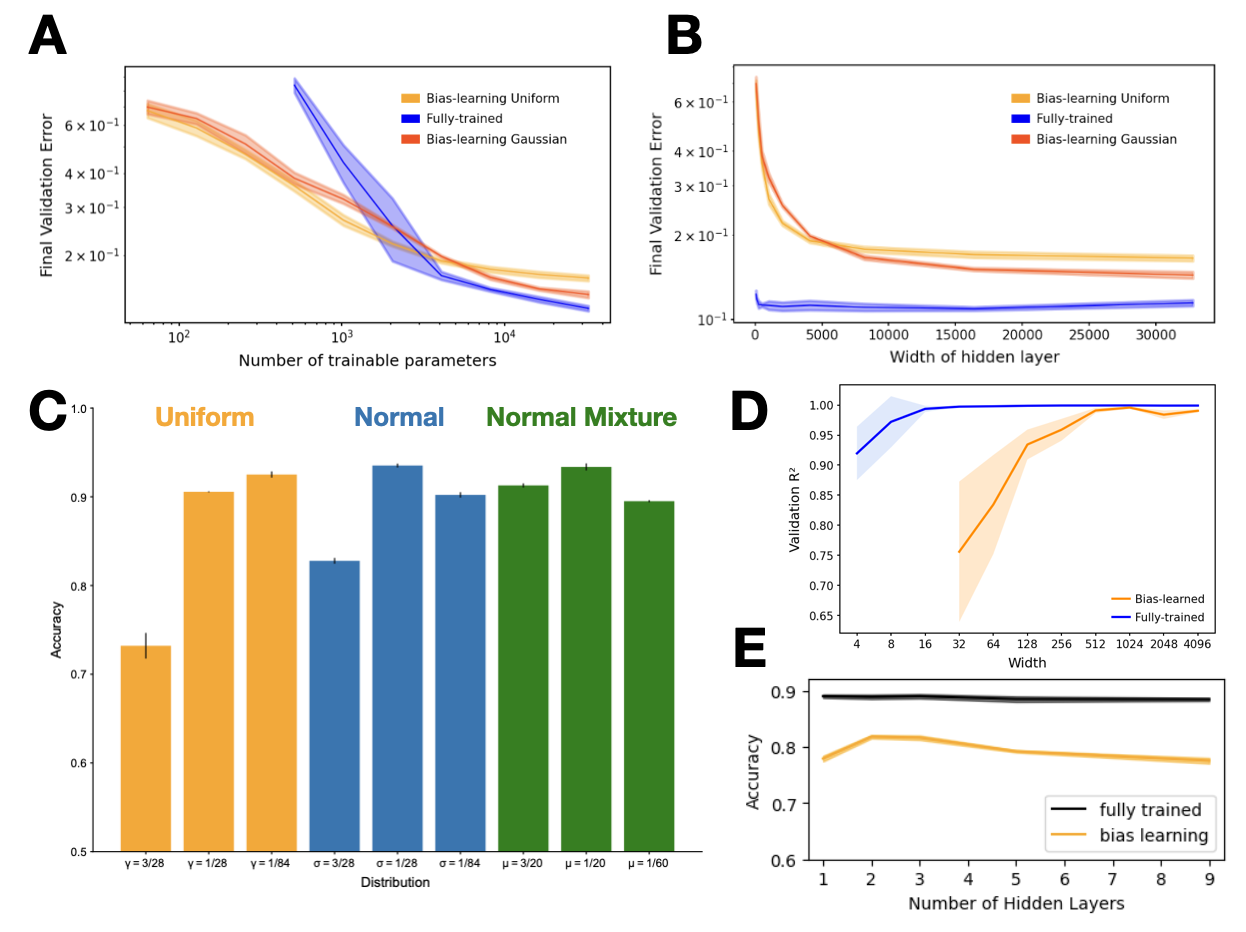}
  \caption{\textbf{Supplementary Experiments.} \textbf{A.} As in Fig.\ref{fig:feed1}.A except with $x$ and $y$-axes log-scaled. \textbf{B.} As in Fig.\ref{fig:feed1}.A except $x$-axis is layer width. \textbf{C.} Performance on MNIST (y-axis) of bias learned MLPs as function of distribution from which each individual weight is initialized (x-axis). Orange: uniform on $[-\gamma, \gamma]$; Blue: $0$ mean Gaussian with standard deviation $\sigma$; Green: mixture of Gaussians centred at $\pm\mu$ each with standard deviation $0.015$. Layer width is $10,000$ units. \textbf{D.} As in Fig.\ref{fig:lorentz}.A except $x$-axis is layer width. \textbf{E.} Performance ($y$-axis) of fully trained versus bias learned (uniform initialized weights) MLPs on fashion MNIST as a function of depth ($x$-axis). Layer width is 2048 units for both learning types. All plots: plotted are means and $\pm$SD over $5$ random seeds.}
  \label{fig:supp-figure}
\end{figure}

\begin{figure}[hbt!]
  \centering
  \includegraphics[width=1\textwidth]{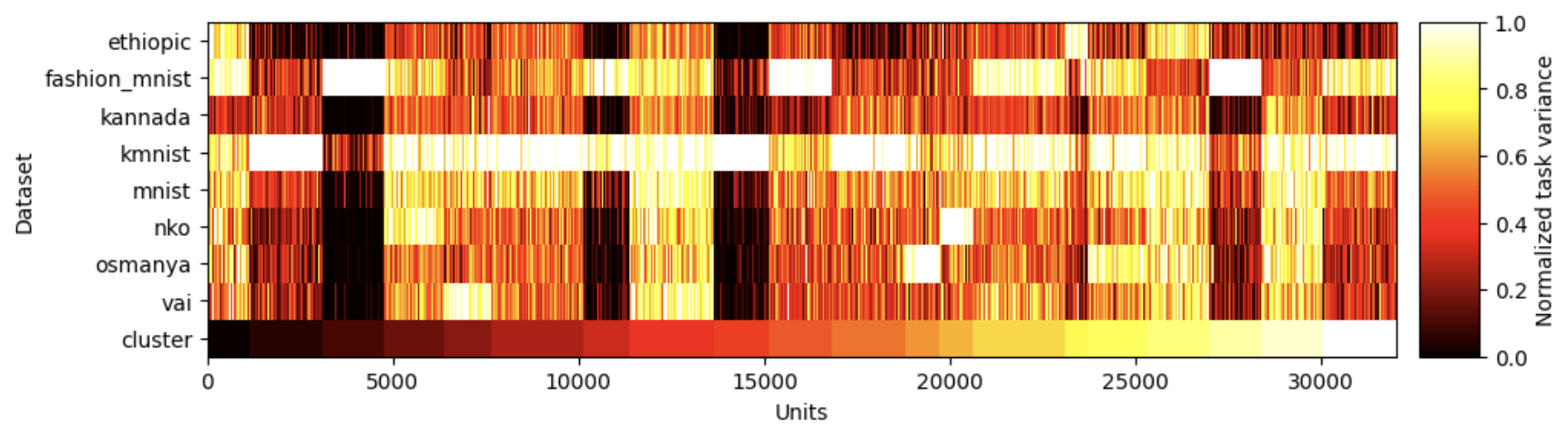}
  \caption{\textbf{Supplementary for Figure 1.} Figure \ref{fig:feed1}.C but for a single fully-trained network fit to a meta-dataset containing the $8$ MNIST style datasets used in Fig.\ref{fig:feed1}. The bottom row is cluster index. We observe task selectivity that is qualitatively similar to bias-learning. See \S \ref{sec:methods1} for methods.}
  \label{fig:supp-fig1}
\end{figure}

\begin{figure}[hbt!]
  \centering
  \includegraphics[width=1\textwidth]{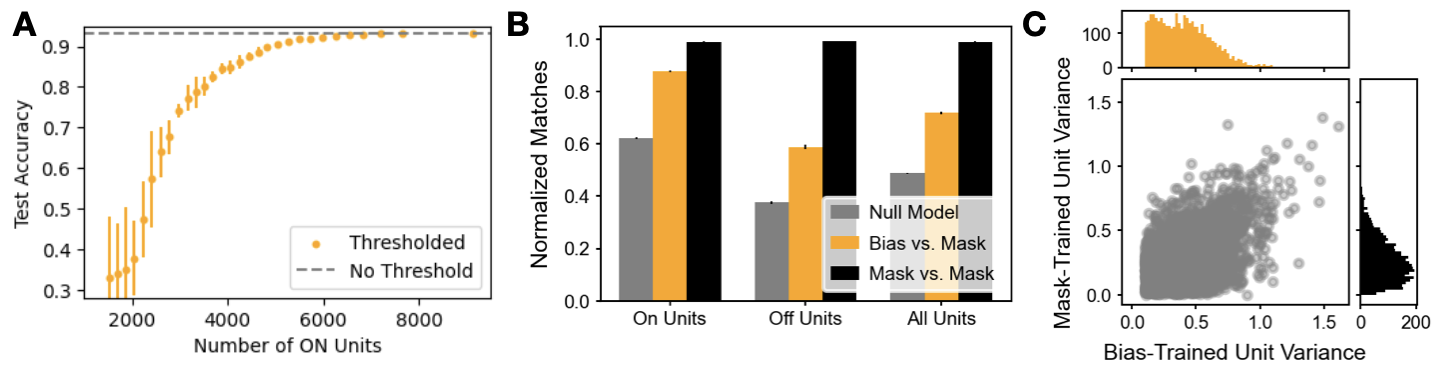}
  \caption{\textbf{Supplementary for Figure 2.} \textbf{A.} Decrease in bias learning test accuracy as the task variance threshold, above which units are ON, is increased. $x$-axis is number of ON units; $y$-axis is test accuracy on MNIST. Threshold values are those used in Fig.\ref{fig:bias-mask}.B. Plotted is mean and SD over $3$ training runs. \textbf{B.} Probability that a given unit is ON/OFF for versions of a network with the same weights trained by bias versus mask learning. Left group of bars is probability of the unit being ON for bias learning given that it is ON for mask learning, middle is the same but for OFF instead of ON, right is the probability of either match occurring (OFF or ON in both training paradigms). Grey is a null model (see section \ref{sec:methods2} for details); orange is the bias-mask comparison; black is the same comparison but between two mask-learning training runs. \textbf{C.} Scatter plot of hidden unit variances taking only units that are ON for both mask and bias learning; bias-trained on x-axis and mask-trained on y-axis. Error bars are over $5$ training runs. We observe a mean correlation coefficient of $0.5030\pm0.0087$. We note that the correlation between two mask-learning runs on the same random weights is $0.9999\pm0.0000$. Correlation coefficients are over $5$ samples; plot data is from one, representative, pair of bias/mask networks.}
  \label{fig:supp-fig2}
\end{figure}

\begin{figure}[hbt!]
  \centering
  \includegraphics[width=1\textwidth]{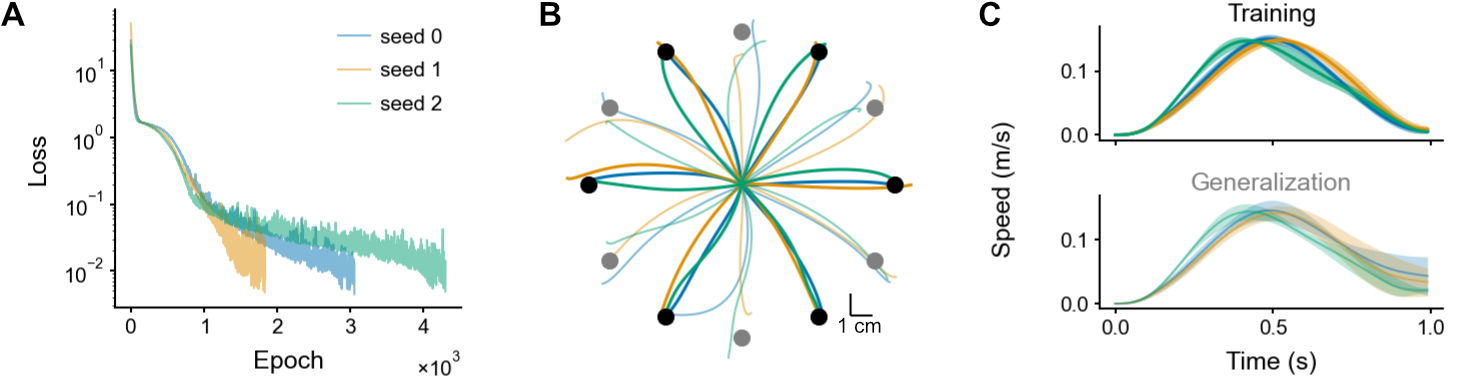}
  \caption{\textbf{Supplementary for Figure \ref{fig:center_out}.} Same figure as Fig.~\ref{fig:center_out} in the main text, but for fully-trained networks. The architecture and initial parameters (seeds) were the same as for the bias-learning networks (see Methods). The only change was that the learning rate was $10^{3}$ times smaller than that for bias learning. For these matched parameter initializations (in particular, given the high variance for the recurrent weights), the solutions found by the fully-trained network had more temporal variations compared to bias learning.}
  \label{fig:supp-fig-center-out}
\end{figure}

\end{document}